\crefname{equation}{}{}
\newtheorem{theorem}{Theorem}
\newtheorem{lemma}[theorem]{Lemma}
\newtheorem{proposition}[theorem]{Proposition}
\newcommand{\R}{{\mathbb R}} %
\newcommand{\cQ}{\mathcal{Q}} %
\newcommand{\cS}{\mathcal{S}} %
\newcommand{\cA}{\mathcal{A}} %
\newcommand{\cD}{\mathcal D}
\DeclareMathOperator*{\argmin}{arg\,min}
\DeclareMathOperator*{\argmax}{arg\,max}
\def\Lap{\textrm{Laplace}} %
\newcommand{\lrs}[1]{\left [#1 \right]} %
\NewDocumentCommand{\E}{o}{\mathbb E\IfValueT{#1}{\lrs{#1}}}
\newcommand{\V}{\mathbb{V}}
\NewDocumentCommand{\Var}{o}{\V\IfValueT{#1}{\lrs{#1}}}
\begin{document}

\begin{frontmatter}

\paperid{6112}

\title{Improving Actor-Critic Training with Steerable Action-Value Approximation Errors}

\author[A]{\fnms{Bahareh}~\snm{Tasdighi}\thanks{Corresponding author: \texttt{tasdighi@imada.sdu.dk}}}
\author[A]{\fnms{Nicklas}~\snm{Werge}}
\author[A]{\fnms{Yi-Shan}~\snm{Wu}} 
\author[A]{\fnms{Melih}~\snm{Kandemir}} 

\address[A]{Department of Mathematics and Computer Science \\ University of Southern Denmark}

\begin{abstract}
Off-policy actor-critic algorithms have shown strong potential in deep reinforcement learning for continuous control tasks. Their success primarily comes from leveraging pessimistic state-action value function updates, which reduce function approximation errors and stabilize learning. However, excessive pessimism can limit exploration, preventing the agent from effectively refining its policies. Conversely, optimism can encourage exploration but may lead to high-risk behaviors and unstable learning if not carefully managed. To address this trade-off, we propose \emph{Utility Soft Actor-Critic (USAC)}, a novel framework that allows independent, interpretable control of pessimism and optimism for both the actor and the critic. USAC dynamically adapts its exploration strategy based on the uncertainty of critics using a utility function, enabling a task-specific balance between optimism and pessimism. This approach goes beyond binary choices of pessimism or optimism, making the method both theoretically meaningful and practically feasible. Experiments across a variety of continuous control tasks show that adjusting the degree of pessimism or optimism significantly impacts performance. When configured appropriately, USAC consistently outperforms state-of-the-art algorithms, demonstrating its practical utility and feasibility.
\end{abstract}

\end{frontmatter}

\section{Introduction} \label{sec:introduction}

Deep reinforcement learning (RL) faces significant challenges when navigating the complex landscape of high-dimensional state spaces and non-linear state-action functions, especially in continuous control tasks. The deadly triad of function approximation, off-policy learning, and bootstrapping compounds these challenges, often leading to instability and poor sample efficiency \citep{mnih2015human,sutton2018reinforcement,van2018deep}. One particularly critical consequence is \emph{overestimation bias} \citep{thrun1993issues}, where the estimated action-values ($Q$-values) exceed their true counterparts \citep{van2010double,van2016deep}. 

Overestimation bias in off-policy RL can arise from multiple sources. A well-studied source is the maximization operator in temporal-difference learning, which systematically drives estimates upward by preferring positive noises. It is typically addressed by min-clipping \citep{fujimoto2018addressing,haarnoja2018soft}, where the minimum of two critic networks is used, though this introduces a systematic underestimation bias \citep{van2010double}. Other contributors include sampling error in batch temporal-difference learning \citep{pavse2020reducing} and reuse bias, which arises when the same data are repeatedly used for updates \citep{ying2023reuse}.

Although the remedies reduce overestimation, they often swing too far toward pessimism, producing overly pessimistic policy evaluations discouraging exploration. This phenomenon, known as {\it pessimistic under-exploration} \citep{ciosek2019better}, limits effective learning in actor-critic methods, necessitating methods that can balance optimism and pessimism more systematically. However, the interaction between critic errors and resulting actor behavior follows complex dynamics in online deep RL \citep{moskovitz2021tactical}, and remains poorly understood theoretically.

We posit that the impact of the interplay between optimism and pessimism on the learning profiles of actor-critic pipelines can be explained by a simple and intuitive pair of hyperparameters. We introduce a new framework that puts the action-value predictions of a pair of critics into a utility-theoretic perspective, which considers optimism and pessimism as points along a spectrum.
This framework is practical as the hyperparameters controlling optimism and pessimism are interpretable and can be tuned for different tasks.
The commonplace min-clipping boils down to only one of infinitely many possibilities within this spectrum. The descriptive value of this spectrum can be of the interest to theoreticians. It can also be used by practitioners as a hyperparameter to improve performance via grid-search. 

\section{Prior art of orchestrating optimism and pessimism} \label{sec:motivation}
The state-of-the-art suggests the following three strategies to counteract the under-exploratory effects of pessimistic policy evaluation:

(i) \textbf{Maximum-entropy policy search.} Algorithms such as Soft Actor-Critic (SAC) \citep{haarnoja2018soft} maintain exploration by incentivizing an entropy increase in the policy distribution. This incentive is added to the reward function in the Bellman target calculation and is accounted for also during actor training. In effect, this is a random exploration scheme that perturbs the action selection independently from an estimation of the accuracy of the action-value approximations. This disconnect manifests itself as the brittleness of the algorithm on the entropy coefficient \citep{haarnoja2018softa}.

(ii) \textbf{Optimistic exploration.} Many theoretically-grounded approaches for model-driven exploration adopt the \emph{Optimism in the Face of Uncertainty (OFU)} principle  \citep{auer2002finite,auer2008near}. \citet{ciosek2019better} implemented this principle into a deep actor-critic pipeline with their seminal Optimistic Actor-Critic (OAC) algorithm. OAC adopts an optimistic approximate upper confidence bound on the $Q$-value function to enhance policy exploration while using a pessimistic lower-confidence bound for critic training. The algorithm is computationally expensive. Furthermore, the effectiveness of its exploration is limited by the approximation errors incurred during confidence set calculation.

(iii) \textbf{Dynamic exploration.} The balance of optimism and pessimism is nuanced, and it varies across different tasks and develops throughout the training process \citep{moskovitz2021tactical}. The Tactical Optimism and Pessimism (TOP) algorithm, introduced by \citet{moskovitz2021tactical}, addresses this variability by dynamically balancing optimism and pessimism multi-armed bandits \citep{bubeck2012regret,lattimore2020bandit}. The approach assumes a Gaussian distributed action-value function and it tunes the degree of exploration by playing a bandit game on a predetermined grid of quantiles. The TOP algorithm uses the same level of pessimism/optimism for both critic and actor training in order to maintain the feasibility of the bandit game, which limits the versatility of the algorithm. Furthermore, as the algorithm is designed upfront for on-the-fly tuning, it does not permit an interpretable analysis of pessimism and optimism in environments of interest.

We propose a simpler alternative: treating the critic pairs as samples from a parametric distribution over action-value functions. For example, when using a Laplace distribution as our main focus in the paper, embedding it within a utility function allows us to express the degree of critic pessimism and actor optimism on an interpretable scale between $-1$ and $+1$. We demonstrate the expressive and instrumental power of this framework within a SAC extension due to the wide adoption of the model family. We refer to the resulting algorithm as a Utility Soft Actor Critic (USAC). However, our framework is equally applicable to alternative families such as deterministic policy gradient variants \cite{lillicrap2016continuous, fujimoto2018addressing}. 
As shown in \Cref{fig:optimism_pessimism}, our framework reveals how the specific demands of an environment for pessimistic policy evaluation and optimistic policy search influence each other. The left panel illustrates the impact of different combinations of pessimism and optimism on final returns, where a coarse grid-search can yield improvements exceeding 30\% in several cases. The right panel shows the corresponding effect on value estimation bias (true value minus estimated value), indicating whether the policy tends to under- or over-estimate returns.

\begin{figure}[h!]
\centering
\includegraphics[width=0.49\columnwidth]{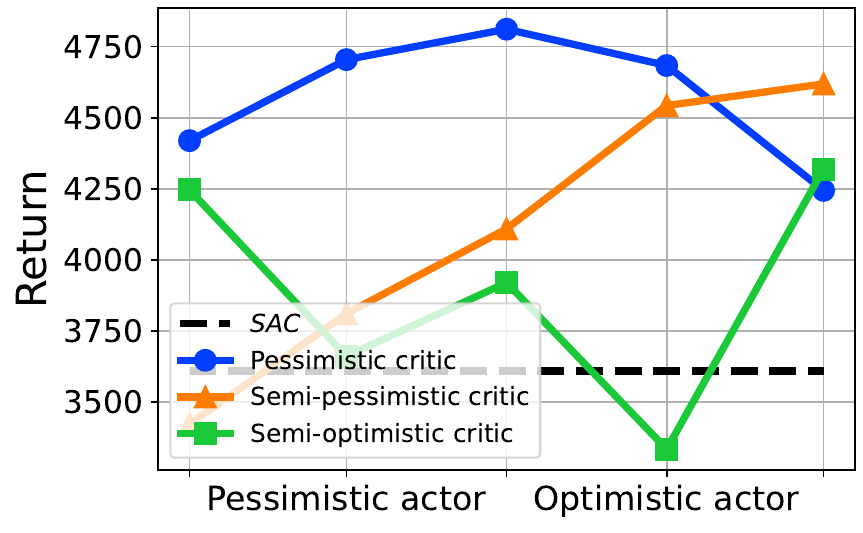}
\includegraphics[width=0.49\columnwidth]{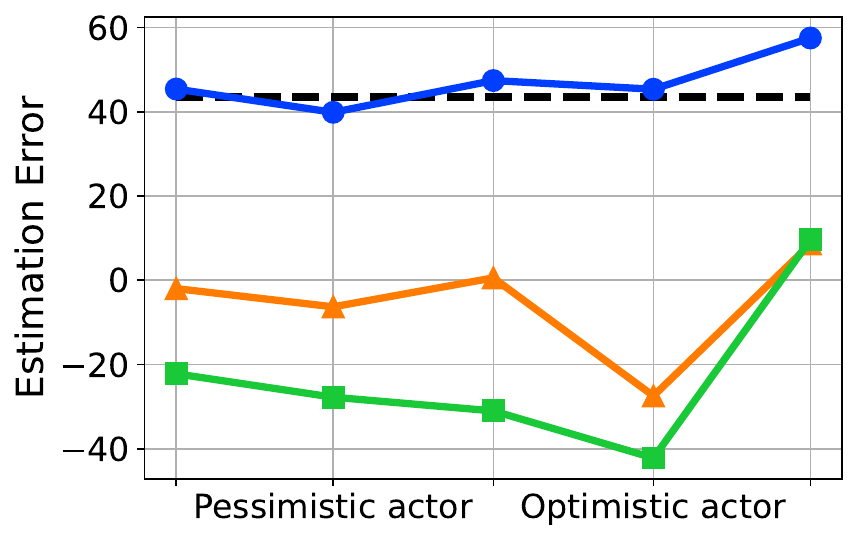}
\caption{Impact of optimism and pessimism levels on actor–critic performance on Walker2d environment. Left: Final returns for different combinations of pessimism and optimism. Right: Value estimation bias (i.e., true – estimated value); positive values indicate underestimation, negative values indicate overestimation}
\label{fig:optimism_pessimism}
\end{figure}

\section{Preliminaries} \label{sec:preliminary}

Throughout this paper, we denote $\mathcal{P}(\Omega)$ as the set of all probability distributions on $\Omega$ and let $\mathcal{B}(\Omega)$ be the set of bounded functions on $\Omega$.

\textbf{Markov decision processes (MDPs).}
An MDP can be represented by the tuple $\mathcal{M}=\langle\mathcal{S},\mathcal{A},p,p_{0},r,\gamma\rangle$ \citep{puterman2014markov}. Here, $\mathcal{S}$ and $\mathcal{A}$ denote the continuous state and action spaces. The function $p(s'|s,a)$ for $(s,a,s')\in \mathcal{S}\times\mathcal{A}\times\mathcal{S}$ represents the transition probability from the current state $s$ and action $a$ to the next state $s'$. We consider the model-free setup where this function is not known and will not be learned. The initial state distribution is denoted by $p_{0}\in\mathcal{P}(\mathcal{S})$, while $r:\mathcal{S}\times\mathcal{A}\rightarrow[0,B_{r}]$ represents the bounded reward function with $B_{r}>0$, and $\gamma\in[0,1]$ stands for the discount factor.

\textbf{Policies.}
Let $\Pi=\{\pi:\mathcal{S}\rightarrow\mathcal{P}(\mathcal{A})\}$ be the set of policies. The interaction between the agent and the MDP $\mathcal{M}$ under some policy $\pi\in\Pi$ progresses iteratively. At each time step $t\in\mathbb{N}$, the agent observes state $s_{t}\in\mathcal{S}$, chooses action $a_{t}\in\mathcal{A}$ based on the policy $a_{t}\sim\pi(\cdot\vert s_{t})$, receives a (bounded) reward $r_{t}:=r(s_{t},a_{t})$, and transitions to the next state $s_{t+1} \sim p(\cdot\vert s_{t},a_{t})$. Additionally, we define $p^{\pi}(s',a'\vert s,a)=p(s'\vert s,a)\pi(a' \vert s')$ as the one-step transition probability from $(s,a)$ to $(s',a')$.

\textbf{Maximum entropy RL.}
The standard goal of RL is to find a policy $\pi$ that maximizes the expected sum of discounted rewards $J(\pi) = \E_{\pi}[\sum_{t=0}^{\infty} \gamma^{t}r_{t}]$ with initial state $s_{0} \sim p_{0}$ \citep{sutton2018reinforcement,bertsekas1996neuro,szepesvari2022algorithms}. To improve exploration, we will consider the more general maximum entropy RL framework \citep{ziebart2010modeling,haarnoja2017reinforcement,haarnoja2018soft,schulman2017equivalence}. In this framework, one aims not only to maximize the expected (discounted) return but also to maximize the (discounted) entropy of the actions suggested by the stochastic policies. Formally, this is achieved by incorporating an entropy term, tempered by a parameter $\alpha>0$, which governs the relative importance of the return compared to the policy entropy. The maximum entropy objective function is defined as ${J_{\alpha}(\pi)=\E_\pi\lrs{\sum_{t=0}^\infty \gamma^t (r_{t}+\alpha \mathcal{H}(\pi(\cdot|s_t)))}}$, where the policy entropy term is defined as $\mathcal{H}(\pi(\cdot|s)) = -\E_{a\sim\pi(\cdot\vert s)}[\log\pi(a|s)]$.

A policy is typically assessed by an action-value function ${Q:\mathcal{S}\times \mathcal{A}\rightarrow \R}$ \citep{watkins1992q}. Specifically, starting from initial state $s_{0}=s$ and action $a_{0}=a$, the action-value function $Q^\pi$ obtained by following policy $\pi$ satisfies $Q^{\pi}(s,a)=J_{\alpha}(\pi)$. In addition, this is the unique fixed-point solution to the soft Bellman operator ${T^\pi:\cal B(\cal S\times \cal A)\rightarrow  B(S\times A)}$ \citep{bertsekas1996neuro}:
\begin{equation} 
\resizebox{1.0\hsize}{!}{\ensuremath{T^\pi Q(s,a) = r(s,a) + \gamma\E_{(s',a')\sim p^\pi(\cdot \vert s,a)}[Q(s',a') -\alpha\log\pi(a'|s')]}}, \label{eq:sac:bellmann_op}
\end{equation}
that is, $T^\pi Q^\pi(s,a) = Q^\pi(s,a)$ for any $(s,a)\in\cS\times \cA$.

\textbf{Actor-critic algorithms.}
Performing continuous control with deep RL necessitates a policy-gradient approach applicable to online updates. The actor-critic design is an optimal and widely adopted framework for this purpose. In these algorithms, an agent simultaneously learns a policy (the actor), responsible for selecting actions that maximize expected return, and an estimation of the action-value function (the critic), responsible for evaluating the policy's quality through iterative updates. Formally, the \emph{critic} $Q$ is trained to predict the action-value function $Q^\pi$ (derived by following policy $\pi$) by temporal difference learning
\begin{equation} \label{eq:critic-training}
    \argmin_{Q} \E_{(s,a,r,s')\sim \mathcal{D}}[(Q(s,a) - y(s,a,r,s'))^{2}],
\end{equation}
where $(s,a,r,s')$ is sampled from the replay buffer $\mathcal{D}$, and the critic's \emph{target value} is defined as
\begin{equation}\label{eq:critic-target:plain}
    y(s,a,r,s') = r + \gamma[\bar{Q}(s',a') - \alpha\log\pi(a'|s')], \quad a'\sim \pi(\cdot|s'),
\end{equation}
which is built using a \emph{target critic} $\bar{Q}$  that may differ from the critic $Q$ \citep{mnih2013playing,mnih2015human}. The \emph{actor} learns its policy by solving the following:
\begin{equation} \label{eq:actor-training}
    \argmax_{\pi\in\Pi} \E_{a\sim \pi(\cdot|s)}[\tilde{Q}(s,a) - \alpha \log \pi(a|s)],
\end{equation}
where $\tilde{Q}$ is (again) an estimate of the action-value function $Q^{\pi}$, which can differ from both the critic $Q$ and the target critic $\bar{Q}$. In scenarios where the agent does a full Bellman backup without approximation errors, the critic $Q$, the target critic $\bar{Q}$, and the critic used to update the actor $\tilde{Q}$ are all identical to $Q^{\pi}$ \citep{sutton2018reinforcement,haarnoja2018soft}. However, variations in $Q$, $\bar{Q}$, and $\tilde{Q}$ are often introduced to address the challenges outlined in the introduction. We will explore some of these variations in more detail below.

\textbf{The sources of optimism and pessimism in actor-critics.}  A common strategy to address overestimation involves estimating the action-value function using twin critics, along with associated target critics that are delayed versions of these critics \citep{lillicrap2016continuous,mnih2013playing,mnih2015human,fujimoto2018addressing}.
Specifically, let $Q_1,Q_2:\mathcal{S}\times \mathcal{A}\rightarrow \R$ be the two critics and $\bar{Q}_1,\bar{Q}_2:\mathcal{S}\times \mathcal{A}\rightarrow \R$ the two target critics that are delayed versions of the critics. The pessimistic strategy, employed by TD3 \citep{fujimoto2018addressing}, SAC \citep{haarnoja2018soft} and OAC \citep{ciosek2019better}, is to use the minimum of the two target critics $\{\bar Q_k\}_{k=1,2}$ to compute the target value of the critics in \eqref{eq:critic-target:plain}: 
\begin{equation} \label{eq:critic-target:TD3}
\bar{Q}(s,a) = \min_{k=1,2} \bar{Q}_{k}(s,a).
\end{equation}
This target is used to train both critics $Q_1$ and $Q_2$. Similarly, the same approach is employed to update the actor (in TD3 and SAC). Here, the critic used to update the actor, $\tilde{Q}$ in \eqref{eq:actor-training}, is determined by the minimum of the two critics $\{Q_{k}\}_{k=1,2}$:
\begin{equation} \label{eq:actor-target:SAC}
\tilde{Q}(s,a) = \min_{k=1,2}Q_{k}(s,a).
\end{equation}
\citet{ciosek2019better} noted that the pessimistic approach, described in \cref{eq:critic-target:TD3,eq:actor-target:SAC}, can lead to pessimistic under-exploration. To prevent this, they proposed embracing the \textit{optimism in the face of uncertainty} principle in their OAC algorithm. However, this approach can potentially make exploration too aggressive, thereby negatively impacting overall performance \citep{moskovitz2021tactical}. Instead, the balance between optimism and pessimism should be tailored to the problem's nature. \citet{moskovitz2021tactical} suggested using the following target critic to achieve this balance:
\begin{equation}\label{eq:critic-target:TOP}
    \bar{Q}(s,a)=\bar{\mu}(s,a) + \beta \bar{\sigma}(s,a), \quad \beta\in\R,
\end{equation}
where the mean $\bar{\mu}$ and (unbiased) standard deviation $\bar{\sigma}$ are constructed using the two target critics $\bar Q_1, \bar Q_2$:
\begin{align*}
  \bar{\mu}(s,a) = \frac{1}{2}(\bar{Q}_1(s,a) + \bar{Q}_2(s,a))   
\end{align*}
and 
\begin{align*}
\bar{\sigma}(s,a) = \sqrt{\sum_{k=1,2}(\bar{Q}_{k}(s,a) - \bar{\mu}(s,a) )^2}.    
\end{align*}
The critic to update the actor, $\tilde Q$, can be defined in a similar manner from the two critics $\{Q_k\}_{k=1,2}$. \citet{moskovitz2021tactical} classify \cref{eq:critic-target:TOP} as \emph{optimistic} when $\beta\ge 0$ and \emph{pessimistic} when $\beta<0$. In the special case when $\beta=-1/\sqrt{2}$, \cref{eq:critic-target:TOP} simplifies to taking the minimum of the two critics  as in \cref{eq:critic-target:TD3,eq:actor-target:SAC}. The TOP algorithm tries to adjust the balance between optimism and pessimism by using a multi-armed bandit algorithm to select $\beta$ from the set $\{-1,0\}$; here $\beta=-1$ corresponds to a pessimistic estimate and $\beta=0$ to an optimistic one. Although $\beta$ can vary over time using a bandit algorithm, TOP uses the same $\beta$ for both $\bar{Q}$ and $\tilde{Q}$.

\section{A utility-theoretic framework to make the action-value biases steerable} \label{sec:USAC}

Our key contribution is a new theoretical framework to quantify the degree of optimism or pessimism assumed by an actor-critic algorithm that performs policy evaluation via twin critics using two interpretable scalars, one for the critic and one for the actor. We achieve this by assuming that the action-value function follows a parametric distribution and the twin critics are two samples from this distribution. The resulting utility function yields a form that enables describing the optimism and pessimism levels via scalar risk coefficients that take the commonplace min-clipping method as a special case and provides an effortless generalization of it by simple grid-search.

Let $\mathcal{Q}\in \mathcal{P}(\mathcal{B}(\mathcal{S}\times \mathcal{A}))$ be a distribution over $Q$-functions, serving as the agent's estimate of the $Q$-functions. The dispersion of the distribution reflects the uncertainty of the learned $Q$-functions. To characterize such a distribution, we use the utility function of $\mathcal{Q}$, defined by
\begin{equation} \label{eq:utility_function}
    U^{\mathcal{Q}}_{\lambda} (s,a) = \frac{1}{\lambda}\log\E_{Q\sim\mathcal{Q}}[\exp(\lambda Q(s,a))], \quad \lambda\in\R,
\end{equation}
where $U^{\mathcal{Q}}_{\lambda}(s,a)=\E_{Q\sim\mathcal{Q}}[Q(s,a)]$ when $\lambda=0$. This utility function is commonly used for measuring risk in finance, economics, and decision-making under uncertainty \citep{follmer2011stochastic,von1947theory}. It also shares similarities with the utility functions employed in risk-sensitive RL \citep{howard1972risk,shen2014risk,prashanth2022risk} and distributional RL \citep{rowland2019statistics,bellemare2017distributional,bellemare2023distributional}. It is noteworthy that we apply this utility function to the $\mathcal{Q}$ distribution rather than the return distribution, as is typically done in risk-sensitive and distributional RL.
The utility function is also referred to as \emph{expected utility}, \emph{exponential utility}, \emph{exponential risk measure}, \emph{generalized mean}, or \emph{entropic risk measure} according to the context \citep{follmer2011entropic,follmer2011stochastic}. 

We use \cref{eq:utility_function} as a measure of pessimism or optimism of the $\mathcal{Q}$ distribution: a positive value of $\lambda$ implies an \emph{optimistic} view, while a negative value indicates a \emph{pessimistic} view. This interpretation is clear from a second-order Taylor expansion applied to the utility function: 
\begin{align*}
U^{\mathcal{Q}}_{\lambda}=\E_{Q\sim\mathcal{Q}}[Q]+\frac{\lambda}{2}\V_{Q\sim\mathcal{Q}}[Q]+\mathcal{O}(\lambda^{2}).    
\end{align*}
Consequently, the utility function $U^{\mathcal{Q}}_{\lambda}$ spans the entire spectrum of the $\mathcal{Q}$ distribution by varying $\lambda$. 

\subsection{Policy evaluation, improvement and guarantees} \label{sec:USAC:prop}

For any given policy $\pi\in\Pi$, the utility $U^\cQ_\lambda\in\mathcal{B}(\cS\times\cA)$ will reduce to $Q^\pi$ for any $|\lambda|<\infty$, when $\cQ$ is a Dirac delta distribution centered at $Q^\pi$. This implies that the deviation of $U^\cQ_\lambda$ from $Q^\pi$ is due to the uncertainty in the learned $Q$-function. In fact, we observe that the Bellman operator  should also bring the utility $U^\cQ_\lambda$ to its fixed point $Q^\pi$~\citep{bertsekas1996neuro}
\begin{lemma}\label{lem:BE2TD}
For any expected utility of $\cQ \in \mathcal{P}(\cS \times \cA)$, we have
\begin{equation*}%
\lVert T^\pi U^{\mathcal{Q}}_\lambda-U^{\mathcal{Q}}_\lambda \rVert^2_{\rho^\pi}\leq \E_{(s,a)\sim\rho^\pi}\E_{(s',a')\sim p^\pi(\cdot|s,a)}\lrs{\ell(U^{\cQ}_\lambda;\xi)},
\end{equation*}
where 
\begin{align*}
\ell(U^{\cQ}_\lambda;\xi)=(r(s,a)+\gamma (U^{\cQ}_\lambda(s',a')-\alpha\log(\pi(a'|s')))-U^{\cQ}_\lambda(s,a))^2    
\end{align*}
is the squared loss of $U^\cQ_\lambda$ on an observation $\xi=(s,a,s',a')$.
\end{lemma}
\begin{proof}
For any $\cQ \in \mathcal{P}(\mathcal{S}\times\mathcal{A})$, we have
\begin{align*}
&\lVert T^{\pi}U^{\cQ}_\lambda - U^{\cQ}_\lambda \rVert_{\rho^{\pi}}^{2} = \mathbb{E}_{(s,a)\sim\rho^\pi}[(T^{\pi}U^{\cQ}_\lambda(s,a)-U^{\cQ}_\lambda(s,a))^2] \\
&= \mathbb{E}_{(s,a)\sim \rho^\pi}[(r(s,a)+\gamma \mathbb{E}_{(s',a')\sim p^\pi(\cdot|s,a)}[U^{\cQ}_\lambda(s',a')- \\ & \quad \quad \quad \quad \quad \quad \alpha\log(\pi(a'|s'))]-U^{\cQ}_\lambda(s,a))^2] \\
&\leq \mathbb{E}_{(s,a)\sim \rho^\pi}\mathbb{E}_{(s',a')\sim p^\pi(\cdot|s,a)}[(r(s,a)+\gamma (U^{\cQ}_\lambda(s',a')-\\&
\quad \quad \quad \quad \quad \quad
\alpha\log(\pi(a'|s')))-U^{\cQ}_\lambda(s,a))^2],
\end{align*}
where the inequality comes from the Jensen’s inequality.
\end{proof}

Thus, as a consequence of \cref{lem:BE2TD}, the Bellman error  defined by
\begin{align*} %
&T^\pi U^{\mathcal{Q}}_\lambda(s,a)-U^{\mathcal{Q}}_\lambda(s,a)=r(s,a)\\
&\qquad +\gamma\E_{(s',a')\sim p^\pi(\cdot|s,a)}\lrs{U_\lambda^{\mathcal{Q}}(s',a')-\alpha\log(\pi(a'|s'))} - U_\lambda^{\mathcal{Q}}(s,a)
\end{align*}
should diminish. Nevertheless, it fosters exploration during learning by maintaining a distribution over the $Q$-functions.  On the other hand, given a $\cQ$ distribution associated with the current policy $\pi$, denoted as $\cQ^\pi$, we update the policy, similar to \cref{eq:actor-training}, through the following maximization problem:
\begin{equation*}
    \argmax_{\pi\in\Pi}\E_{a\sim \pi(\cdot|s)}[ U_{\lambda}^{\cQ^{\pi}}(s,a) - \alpha \log \pi(a|s)].
\end{equation*}
Extending the policy iteration process from SAC to utility functions ensures that the guarantees of policy improvement still hold \citep{haarnoja2018soft,haarnoja2017reinforcement}. 

\subsection{The Utility Soft Actor Critic (USAC)} \label{sec:USAC:alg}

We illustrate a simple implementation of our utility-theoretic formulation into a soft actor-critic pipeline and refer to the resulting algorithm as the {\it Utility Soft Actor Critic (USAC)}. We perform online and off-policy learning as usual, maintaining a replay buffer of observed environment interactions $(s,a,r,s')\sim\cD$. As the operator $T^\pi$ should bring the utility $U^\cQ_\lambda$ to its fixed-point $Q^\pi$, the USAC framework learns the critic $Q$ (an estimate of $Q^\pi$) as in \cref{eq:critic-training} but with the critic's \emph{target value} defined as
\begin{equation}\label{eq:critic-target:USAC}
    y(s,a,r,s') = r + \gamma[U_{\lambda_{\text{critic}}}^{\bar{\cQ}}(s',a') - \alpha\log\pi(a'|s')],
\end{equation}
where $a'\sim \pi(\cdot|s')$. Note that we use a \emph{target critic distribution} $\bar{\cQ}$ with some utility parameter $\lambda_{\text{critic}}$ chosen specifically for the critic. Similar to \cref{eq:actor-training}, the \emph{actor} learns its policy by
\begin{equation} \label{eq:actor-training:USAC}
    \argmax_{\pi\in\Pi} \E_{a\sim \pi(\cdot|s)}[U_{\lambda_{\text{actor}}}^{\tilde\cQ}(s,a) - \alpha \log \pi(a|s)],
\end{equation}
using $\tilde{\cQ}$ as the estimate for $\cQ$ and $\lambda_{\text{actor}}$ as the utility parameter for the actor. 

Note that \citet{moskovitz2021tactical} implements a single shared optimism/pessimism parameter for the actor and the critic. In contrast, our framework naturally decouples the optimism and pessimism levels of the actor and critic into two separate scalars, $\lambda_{\text{critic}}$  and  $\lambda_{\text{actor}}$, respectively. This decoupling allows both investigating the relationship between the optimism and pessimism demands of the actor and the critic and tuning them separately, thereby benefiting from their complementary roles in online training. Later in the experiments presented in \cref{sec:experiments}, we will explore various combinations of optimism and pessimism levels. 
\begin{algorithm}[t!]
  \caption{USAC with Laplace distribution (\cref{sec:USAC-lap})}
  \label{alg:USAC}
  \begin{algorithmic}[1]
    \Require Averaging parameter $\tau\in(0,1)$, learning rates $\eta_{\theta},\eta_{\phi}>0$, mini-batch size $n\in\mathbb{N}$, pessimism/optimism coefficients $\kappa_{\text{actor}},\kappa_{\text{critic}}\in(-1,1)$, entropy coefficient $\alpha$, \\ Polyak coefficient $\tau$
    \State \textbf{Initialize}: Replay buffer $\mathcal{D}=\emptyset$, initial state $s_0\sim p_0$, critic $\{\theta_{k}\}_{k=1,2}$, target critic $\{\bar{\theta}_{k}\}_{k=1,2}$ and actor policy $\phi$
    \For{each time step}
    \For{each environment step}
    \State $a_t\sim \pi_\phi(\cdot|s_t)$ \Comment{Sample action from the policy $\phi$}
    \State $s_{t+1}\sim p(\cdot|s_t,a_t)$ \Comment{Sample transition from the environment}
    \State $\mathcal{D}\leftarrow\mathcal{D} \cup (s_t,a_t,r_t,s_{t+1})$ \Comment{Store transition in the replay pool}
    \EndFor
    \For{each training step}
    \State $\{(s_i,a_i,r_i,s_i',a_i') : (s_i,a_i,r_i,s_i')\sim \mathcal{D},a_i'\sim \pi_\phi(\cdot|s_i')\}_{i=1}^{n}$ \Comment{Sample mini-batch}
    \For{$i\in\{1,\dots,n\}$} \Comment{Compute target critic dist. $\bar{\mathcal{Q}}$}
    \State $\mu_{\bar{\cQ}}(s_{i},a_{i}) \gets \frac{1}{2}(Q_{\bar{\theta}_{1}}(s_{i},a_{i}) + Q_{\bar{\theta}_{2}}(s_{i},a_{i}))$  
    \State $\sigma_{\bar{\cQ}}(s_{i},a_{i}) \gets \frac{1}{2} \lvert Q_{\bar{\theta}_{1}}(s_{i},a_{i}) - Q_{\bar{\theta}_{2}}(s_{i},a_{i})\rvert$
    \EndFor
    \State
    $y(s_{i},a_{i},r_{i},s_{i}') \gets$ \Comment{Compute target} \\$r_{i} + \gamma[\mu_{\bar{\cQ}}(s_{i}',a_{i}') + g(\kappa_{\text{critic}})\sigma_{\bar{\cQ}}(s_{i}',a_{i}') - \alpha\log\pi_{\phi}(a_{i}' \vert s_{i}')]$ 
    \For{each critic, $k\in\{1,2\}$} \Comment{Update critics}
    \State $\theta_k \gets \theta_k - \eta_{\theta} \nabla_{\theta_k}\{\frac{1}{n}\sum_{i=1}^{n} ( Q_{\theta_{k}}(s_{i},a_{i})-
    y(s_{i},a_{i},r_{i},s_{i}'))^{2}\}$
    \EndFor
    \For{$i\in\{1,\dots,n\}$} \Comment{Compute actor dist. $\tilde{\mathcal{Q}}$}
    \State $\mu_{\tilde{\cQ}}(s_{i},a_{i}) \gets \frac{1}{2}(Q_{\theta_{1}}(s_{i},a_{i}) + Q_{\theta_{2}}(s_{i},a_{i}))$  
    \State $\sigma_{\tilde{\cQ}}(s_{i},a_{i}) \gets \frac{1}{2} \lvert Q_{\theta_{1}}(s_{i},a_{i}) - Q_{\theta_{2}}(s_{i},a_{i})\rvert$
    \EndFor
    \State$\phi\gets\phi+\eta_{\phi}\nabla_{\phi}\{\frac{1}{n}\sum_{i=1}^{n} ( \mu_{\tilde\cQ}(s_{i},a_{i}) + g(\kappa_{\text{actor}}) \sigma_{\tilde\cQ}(s_{i},a_{i}) - \alpha \log \pi_{\phi}(a_{i} \vert s_{i}))\}$ \Comment{Update actor}
    \For{each critic, $k\in\{1,2\}$} 
    \State $\bar{\theta}_{k}\leftarrow \tau \theta_{k} + (1-\tau)\bar{\theta}_{k}$ \Comment{Update target critic}
    \EndFor
    \EndFor
    \EndFor
  \end{algorithmic}
\end{algorithm}

\subsection{Laplace distributed critics}\label{sec:USAC-lap}

The utility formulation in \cref{eq:utility_function} enables us to compute the utility for any distribution  with a moment-generating function. We will next present an implementation of a specific case of our USAC algorithm where $\cQ$ follows a Laplace distribution, which has the desirable property of expressing the utility parameter within the $[-1,+1]$. The pseudo-code of the resulting USAC algorithm is provided in \cref{alg:USAC}. Here, we denote the utility parameter associated with this distribution as $\kappa$, reflecting its characteristics within the Laplace context.
\begin{proposition} \label{prop:utility_function:laplace}
Suppose $\cQ$ is a Laplace distribution. Then, for any $(s,a)\in\mathcal{S}\times\mathcal{A}$, the utility function of $\cQ$ can be expressed as
\begin{equation} \label{eq:utility_function:laplace}
U^{\mathcal{Q}}_{\kappa}(s,a)=\mu_{\mathcal{Q}}(s,a)+g(\kappa)\sigma_{\mathcal{Q}}(s,a) \quad \text{for} \quad \kappa\in(-1,1),
\end{equation}
where $\mu_{\mathcal{Q}}$ and $\sigma_{\mathcal{Q}}$ represent the mean and standard deviation of $\mathcal{Q}$, respectively, and
\begin{equation} \label{eq:g_kappa}
 g(\kappa) = 
    \begin{cases}
        \log(1/(1-\kappa^{2}))/\sqrt{2}\kappa & \: \text{for} \: \kappa\in(-1,1) \backslash \{0\}, \\
        0 & \: \text{for} \: \kappa=0.
    \end{cases}
\end{equation}  
\end{proposition}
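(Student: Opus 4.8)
The plan is to reduce the statement to a direct moment-generating-function (MGF) computation, since the utility in \cref{eq:utility_function} is nothing but a rescaled log-MGF. Fix a point $(s,a)\in\mathcal{S}\times\mathcal{A}$ and write $X:=Q(s,a)$ for the scalar random variable obtained by evaluating $Q\sim\cQ$ at $(s,a)$; by assumption $X$ is Laplace-distributed. I would parametrize this Laplace law by its location $m$ and scale $b>0$, recall that its mean equals $m$ and its variance equals $2b^2$, and hence identify $m=\mu_{\mathcal{Q}}(s,a)$ and $\sigma_{\mathcal{Q}}(s,a)=\sqrt2\,b$, i.e.\ $b=\sigma_{\mathcal{Q}}(s,a)/\sqrt2$. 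This converts the distributional parameters into the interpretable mean/standard-deviation pair appearing in the statement.

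The first computational step is to evaluate the MGF of $X$. Integrating the Laplace density against $e^{\lambda x}$ and splitting the integral at the location $m$ gives $\mathbb{E}[e^{\lambda X}]=e^{m\lambda}/(1-b^2\lambda^2)$, valid precisely on $|\lambda|<1/b$ (outside this range the integral diverges, which is the source of the domain restriction). Substituting into the definition of the utility with parameter $\lambda$ yields $U^{\cQ}_{\lambda}(s,a)=\tfrac1\lambda\log\mathbb{E}[e^{\lambda X}]=\mu_{\mathcal{Q}}(s,a)-\tfrac1\lambda\log(1-b^2\lambda^2)$.

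The step I expect to require the most care is the reparametrization that turns this $(s,a)$-dependent, $b$-dependent expression into the clean form claimed. The natural normalized parameter is $\kappa:=b\lambda=\lambda\,\sigma_{\mathcal{Q}}(s,a)/\sqrt2$: under it the validity region $|\lambda|<1/b$ becomes exactly $\kappa\in(-1,1)$ (independent of $(s,a)$), and substituting $\lambda=\kappa/b$ together with $b=\sigma_{\mathcal{Q}}/\sqrt2$ collapses the expression to $U^{\cQ}_{\kappa}(s,a)=\mu_{\mathcal{Q}}(s,a)+\frac{\sigma_{\mathcal{Q}}(s,a)}{\sqrt2\,\kappa}\log\frac{1}{1-\kappa^2}$, which is exactly $\mu_{\mathcal{Q}}+g(\kappa)\sigma_{\mathcal{Q}}$ with $g$ as in \cref{eq:g_kappa}. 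Getting both the domain $(-1,1)$ and the constant $1/\sqrt2$ to match hinges on keeping the scale-to-standard-deviation conversion $b=\sigma_{\mathcal{Q}}/\sqrt2$ consistent throughout.

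Finally I would dispose of the $\kappa=0$ case separately, since the formula above is a $0/0$ expression there. I can either invoke the already-noted fact that $U^{\cQ}_{0}(s,a)=\mathbb{E}_{Q\sim\cQ}[Q(s,a)]=\mu_{\mathcal{Q}}(s,a)$, or observe that $-\log(1-\kappa^2)=\kappa^2+\mathcal{O}(\kappa^4)$ gives $g(\kappa)=\kappa/\sqrt2+\mathcal{O}(\kappa^3)\to0$ as $\kappa\to0$, so the definition $g(0)=0$ is precisely the continuous extension and the identity $U^{\cQ}_{\kappa}=\mu_{\mathcal{Q}}+g(\kappa)\sigma_{\mathcal{Q}}$ holds on all of $(-1,1)$.
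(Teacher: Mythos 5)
Your proposal is correct and follows essentially the same route as the paper's proof: compute (or quote) the Laplace log-MGF to get $U^{\cQ}_{\lambda}=\mu_{\cQ}-\lambda^{-1}\log(1-\lambda^{2}b^{2})$, convert the scale $b$ to the standard deviation via $\sigma_{\cQ}^{2}=2b^{2}$, and substitute $\lambda=\sqrt{2}\kappa/\sigma_{\cQ}$ to land on $g(\kappa)$ with domain $\kappa\in(-1,1)$. The only additions are the explicit MGF integration and the continuity check at $\kappa=0$, both of which the paper leaves implicit.
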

\begin{proof}
Suppose $Q(s,a)\sim \Lap(\mu_{\mathcal{Q}}(s,a), b_{\mathcal{Q}}(s,a))$. Then, for any $(s,a)\in\mathcal{S}\times\mathcal{A}$, the expected utility function of $\cQ$ on $(s,a)$ is
\begin{equation*}
U^{\mathcal{Q}}_{\lambda}(s,a)=\mu_{\mathcal{Q}}(s,a)-\lambda^{-1}\log(1-\lambda^{2}b_{\mathcal{Q}}^2(s,a)),
\end{equation*}
with $\vert\lambda\vert<1/b_{\mathcal{Q}}(s,a)$. Next, given that the variance of a Laplace distribution is $2b_\cQ^{2}$, we can rewrite the expected utility function as
\begin{equation*}
U^{\mathcal{Q}}_{\lambda}(s,a)=\mu_{\mathcal{Q}}(s,a)-\lambda^{-1}\log(1-\lambda^{2}\sigma_{\mathcal{Q}}^{2}(s,a)/2),
\end{equation*}
with $\vert\lambda\vert<\sqrt{2}/\sigma_{\mathcal{Q}}(s,a)$. Substituting $\lambda=\sqrt{2}\kappa/\sigma_{\mathcal{Q}}(s,a)$ for some $\kappa\in(-1,1)$, we have
\begin{equation*}
U^{\mathcal{Q}}_{\kappa}(s,a)=\mu_{\mathcal{Q}}(s,a)-\sigma_{\mathcal{Q}}(s,a)\log(1-\kappa^{2})/\sqrt{2}\kappa.
\end{equation*}
At last, this expression can be further simplified using the definition of $g(\kappa)$ in \cref{eq:g_kappa}.
\end{proof}
In \cref{prop:utility_function:laplace}, we observe that by varying $\kappa$ in the small interval $(-1,1)$, $U^{\mathcal{Q}}_{\kappa}$ spans the entire spectrum of the $\mathcal{Q}$ distribution. Specifically, as $\lim_{\kappa\rightarrow-1} g(\kappa) \rightarrow -\infty$, $\lim_{\kappa\rightarrow0}g(\kappa)\rightarrow 0$, and $\lim_{\kappa\rightarrow1} g(\kappa) \rightarrow \infty$. Alternatively, one could also presume Gaussian-distributed $Q$-values.\begin{proposition} \label{prop:utility_function:gaussian}
Suppose $\cQ$ is Gaussian with  mean $\mu_{\mathcal{Q}}$ and variance $\sigma_{\mathcal{Q}}^{2}$, i.e., $Q(s,a)\sim \mathcal{N}(\mu_{\mathcal{Q}}(s,a), \sigma_{\mathcal{Q}}^{2}(s,a))$. Then, for any ${(s,a)\in\mathcal{S}\times\mathcal{A}}$, the utility function of $\cQ$ can be expressed as
\begin{equation} \label{eq:utility_function:gaussian}
U^{\mathcal{Q}}_{\lambda}(s,a)=\mu_{\mathcal{Q}}(s,a)+\lambda\sigma_{\mathcal{Q}}^{2}(s,a)/2 \quad \text{for} \quad \lambda\in(-\infty,\infty).
\end{equation}
\end{proposition}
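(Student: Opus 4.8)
The plan is to mirror the proof of \cref{prop:utility_function:laplace} by directly evaluating the defining expression in \cref{eq:utility_function} using the moment-generating function (MGF) of a Gaussian. First I would recall the standard identity that for $Q(s,a)\sim\mathcal{N}(\mu_{\mathcal{Q}}(s,a),\sigma_{\mathcal{Q}}^{2}(s,a))$, the MGF evaluates to $\E_{Q\sim\mathcal{Q}}[\exp(\lambda Q(s,a))]=\exp(\lambda\mu_{\mathcal{Q}}(s,a)+\lambda^{2}\sigma_{\mathcal{Q}}^{2}(s,a)/2)$, which is obtained by completing the square in the exponent of the Gaussian integral. Unlike the Laplace case, where the integral only converges for $\vert\lambda\vert<1/b_{\mathcal{Q}}$, the Gaussian integrand decays like a Gaussian for every real $\lambda$, so the MGF is finite on all of $\R$.

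Next I would substitute this identity into the definition $U^{\mathcal{Q}}_{\lambda}(s,a)=\frac{1}{\lambda}\log\E_{Q\sim\mathcal{Q}}[\exp(\lambda Q(s,a))]$, so that the logarithm of the exponential collapses and the $1/\lambda$ factor cancels one power of $\lambda$, giving $U^{\mathcal{Q}}_{\lambda}(s,a)=\frac{1}{\lambda}(\lambda\mu_{\mathcal{Q}}(s,a)+\lambda^{2}\sigma_{\mathcal{Q}}^{2}(s,a)/2)=\mu_{\mathcal{Q}}(s,a)+\lambda\sigma_{\mathcal{Q}}^{2}(s,a)/2$, which is exactly \cref{eq:utility_function:gaussian}. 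For the boundary value $\lambda=0$, where the $1/\lambda$ prefactor is undefined, I would invoke the convention established after \cref{eq:utility_function} that $U^{\mathcal{Q}}_{0}=\E_{Q\sim\mathcal{Q}}[Q]=\mu_{\mathcal{Q}}$, and note that this is recovered continuously since $\mu_{\mathcal{Q}}+\lambda\sigma_{\mathcal{Q}}^{2}/2\to\mu_{\mathcal{Q}}$ as $\lambda\to0$, so the closed form extends to the full interval $(-\infty,\infty)$.

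Honestly there is no substantive obstacle here; the computation is routine once the Gaussian MGF is in hand. The only conceptual point worth flagging, by way of contrast with \cref{prop:utility_function:laplace}, is the unrestricted domain: because the Gaussian MGF exists for all real $\lambda$, no bound analogous to $\vert\lambda\vert<\sqrt{2}/\sigma_{\mathcal{Q}}$ appears, which is why the utility formula is exact and affine in $\lambda$ across the entire real line rather than on a bounded interval. This also explains why, for the Gaussian, the Taylor remark after \cref{eq:utility_function} becomes exact: the utility equals the mean plus precisely $\lambda/2$ times the variance, with no higher-order corrections.

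\qed
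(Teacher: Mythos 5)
Your proposal is correct and matches the paper's intended argument: the paper's proof of \cref{prop:utility_function:gaussian} simply defers to the Laplace case, which likewise proceeds by evaluating the moment-generating function inside the definition in \cref{eq:utility_function} and simplifying. Your additional remarks on the unrestricted domain and the $\lambda=0$ convention are accurate but not a departure from the paper's route.
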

\begin{proof}[Proof of \cref{prop:utility_function:gaussian}]
The proof follows by similar steps as in the proof of \cref{prop:utility_function:laplace}.
\end{proof}

Now suppose the uncertainty follows a Laplace distribution. Following the algorithmic steps outlined in \cref{sec:USAC:alg}, we can estimate the utility of the target critic distribution $\bar{\cQ}$ by two target critics $\{\bar{Q}_{k}\}_{k=1,2}$, which are delayed versions of the critics $\{Q_{k}\}_{k=1,2}$. The utility function is given by $U^{\bar{\cQ}}_{\kappa_{\text{critic}}} = \mu_{\bar{\cQ}} + g(\kappa_{\text{critic}})\sigma_{\bar{\cQ}}$ with utility parameter $\kappa_{\text{critic}}\in(-1,1)$, where
\begin{equation} \label{critic-target:USAC:Lap:mean}
    \mu_{\bar{\cQ}}(s,a) = \frac{1}{2}(\bar{Q}_1(s,a) + \bar{Q}_2(s,a) ),
\end{equation}
and
\begin{equation} \label{critic-target:USAC:Lap:std}
    \sigma_{\bar{\cQ}}(s,a) =  \frac{1}{2} \lvert \bar{Q}_{1}(s,a) - \bar{Q}_{2}(s,a) \rvert.
\end{equation}
Hence, the target value $y(s,a,r,s')$ to train the critic becomes
\begin{equation} \label{eq:critic-target:lap:plain}
    r + \gamma[\mu_{\bar{\cQ}}(s',a') + g(\kappa_{\text{critic}})\sigma_{\bar{\cQ}}(s',a') - \alpha\log\pi(a'|s')],
\end{equation}
with $a'\sim \pi(\cdot|s')$.
Similarly, the actor learns its policy by
\begin{equation} \label{eq:actor-training:USAC:Lap}
    \argmax_{\pi\in\Pi} \E_{a\sim \pi(\cdot|s)}[\mu_{\tilde\cQ}(s,a) + g(\kappa_{\text{actor}}) \sigma_{\tilde\cQ}(s,a) - \alpha \log \pi(a|s)], 
\end{equation}
for $\kappa_{\text{actor}}\in(-1,1)$, where $\mu_{\tilde{\cQ}}$ and $\sigma_{\tilde{\cQ}}$ are estimated in similar manner as \cref{critic-target:USAC:Lap:mean,critic-target:USAC:Lap:std} on $\tilde{\cQ}$ using the critics $\{Q_k\}_{k=1,2}$. The pseudo-code of this algorithm can be found in \cref{alg:USAC}.

We note that a smaller $\kappa$ value indicates more pessimism, while a larger $\kappa$ value indicates more optimism. By choosing $\kappa$ such that $g(\kappa)=-1$ (i.e., $\kappa\approx-0.83$), the utility in  \cref{eq:utility_function:laplace} simply reduces to the minimum of the two critics. 
Thus, choosing $\kappa_{\text{critic}}=\kappa_{\text{actor}}\approx -0.83$, we obtain the pessimism choices of SAC \citep{haarnoja2018soft} and TD3 \citep{fujimoto2018addressing}. On the other hand, by choosing $\kappa_{\text{critic}}\approx -0.83$ and a positive $\kappa_{\text{actor}}$, we have a pessimism critic and an optimistic actor as explored in OAC \citep{ciosek2019better}. Lastly, by letting a multi-armed bandit algorithm choose $\kappa_{\text{critic}}=\kappa_{\text{actor}}\in\{-0.916563,0\}$, we recover the pessimistic/optimistic design suggested by TOP \citep{moskovitz2021tactical}. To recover the pessimistic case of the TOP algorithm, we can choose $\kappa=-0.916563$, which corresponds to having the function $g$ in \cref{eq:g_kappa} equal to $\sqrt{2}$. We note that by choosing $-0.916563$, TOP considers a more pessimistic critic target than SAC, TD3, and OAC.

\section{Experiments} \label{sec:experiments}
We conduct experiments to answer two questions: (Q1) Can our utility-theoretic framework explain the previously observed relationships between the action-value biases and exploration optimism by two simple scalars, one for actor and one for the critic? (Q2) Can these scalars be steered to improve  performance via a coarse grid-search?

Throughout the experiments, we adopted the USAC version described in \cref{alg:USAC} that assumes Laplace-distributed critics as the representative implementation of our framework. We tested all models on the standard five continuous control benchmarks of the MuJoCo physics engine \citep{todorov2012mujoco,brockman2016openai} to facilitate the interpretation of our results and its comparability to previous studies. The implementation was done in \texttt{PyTorch} \citep[Version $2.1.0$]{paszke2019pytorch}.
We ran each experiment for one million time steps and tested each policy every $10,000$ steps over ten evaluation episodes. We repeated each experiment across five different seeds hard-set to numbers 1 to 5 for training. We chose seed numbers  for evaluation as the training repetition number plus 100. We consider SAC \citep{haarnoja2018soft}, TD3 \citep{fujimoto2018addressing}, OAC \citep{ciosek2019better}, and TOP \citep{moskovitz2021tactical} as baselines, as they represent the most powerful algorithms that take finding an optimal balance between optimism and pessimism as the main goal.
\footnote{For TOP, we use \texttt{\url{https://github.com/tedmoskovitz/TOP}} with the number of random actions set to 10,000.}

We modeled all the actor and critic networks as fully connected layers of $256$ hidden units that follow \texttt{ReLU()} activations.  USAC, SAC, and OAC actors output parameters for a \texttt{SquashedGaussian} module via a linear layer of size \(2 \times d_a\), where the first outputs represent the mean and the second represent the variance. TD3 and TOP use deterministic policies with a linear layer of size \(d_a\) followed by \texttt{Tanh()}. Critic networks for USAC, SAC, OAC, and TD3 output scalars; TOP outputs $N=50$ quantiles. Here, \(d_s\) and \(d_a\) denote state and action dimensions.
Note that although the paper of TOP describes a two-layer network, its code uses three layers; we modify it for consistency with other baselines. For SAC and USAC, we employ automatic entropy temperature tuning $\alpha$ \citep{haarnoja2018softa}. For OAC and TOP, we adopt the best parameters reported in their respective papers.  The hyperparameters and network configurations shared by all models in our consideration are summarized in \cref{tab:hyperparameters}. The baseline models do not have any additional hyperparameters required to be tuned. We provide a public implementation at https://github.com/adinlab/UtilitySoftActorCritic.

\begin{table}[htb]
\centering
\caption{Shared hyperparameters }
\label{tab:hyperparameters}
\begin{tabular}{lcc}
\\ \toprule
Hyperparameter & Value \\
\midrule
Evaluation episodes &10\\
Evaluation frequency & Maximum time steps / 100\\
Discount factor $(\gamma)$ & 0.99\\
$n$-step returns &1 step\\
Replay ratio/update-to-data &1\\
Replay buffer size &1,000,000\\
Maximum time steps &1,000,000\\ 
Mini-batch size $(n)$ & 256\\
Number of critics & 2 \\
Actor/critic optimizer & Adam \citep{kingma2014adam} \\
Optimizer learning rates $(\eta_{\phi},\eta_{\theta},\eta_{\alpha})$ &3e-4\\
Averaging parameter $(\tau)$ &5e-3\\
\bottomrule
\end{tabular}
\end{table}

Our framework can be implemented as a drop-in into any standard actor-critic pipeline as an alternative way to aggregate critic outputs into a single value. Hence, it does not cause a tangible computational overhead. Our representative USAC implementation adopts the twin-critic and single-actor architecture used in standard SAC, TD3, OAC, and TOP implementations. Its computation time is indistinguishable from a standard SAC implementation.

\begin{figure*}[htb]
\centering
\includegraphics[width=0.195\textwidth]{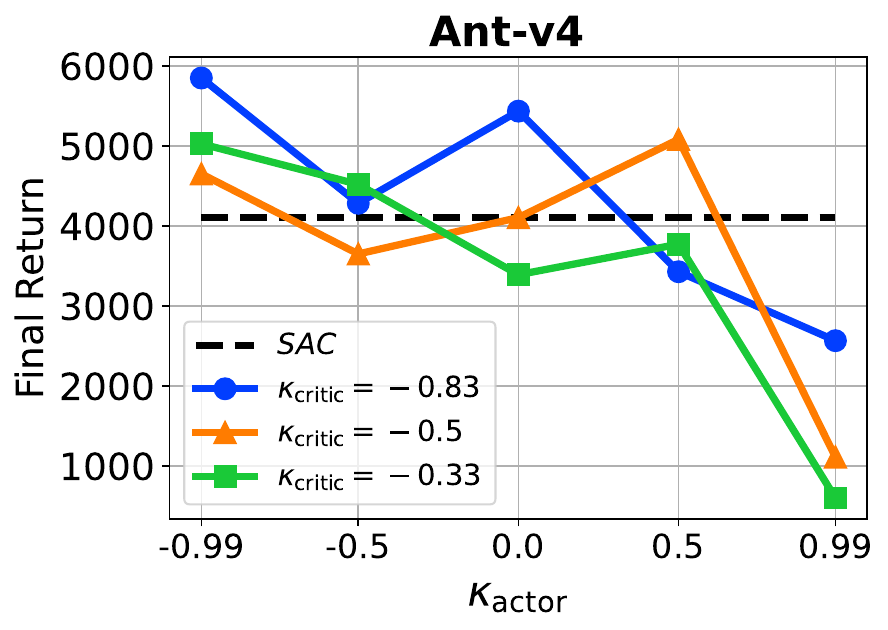}
\includegraphics[width=0.195\textwidth]{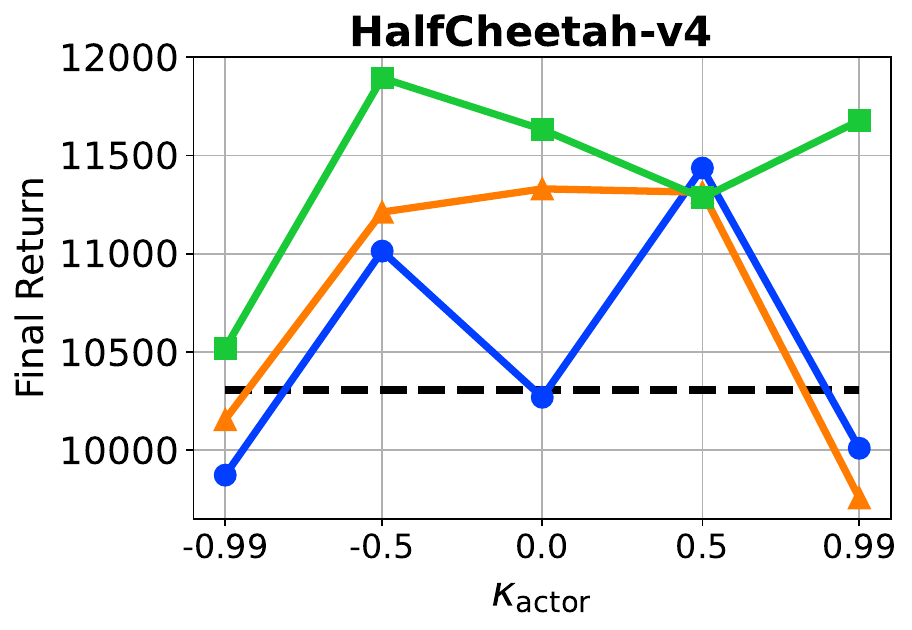}
\includegraphics[width=0.195\textwidth]{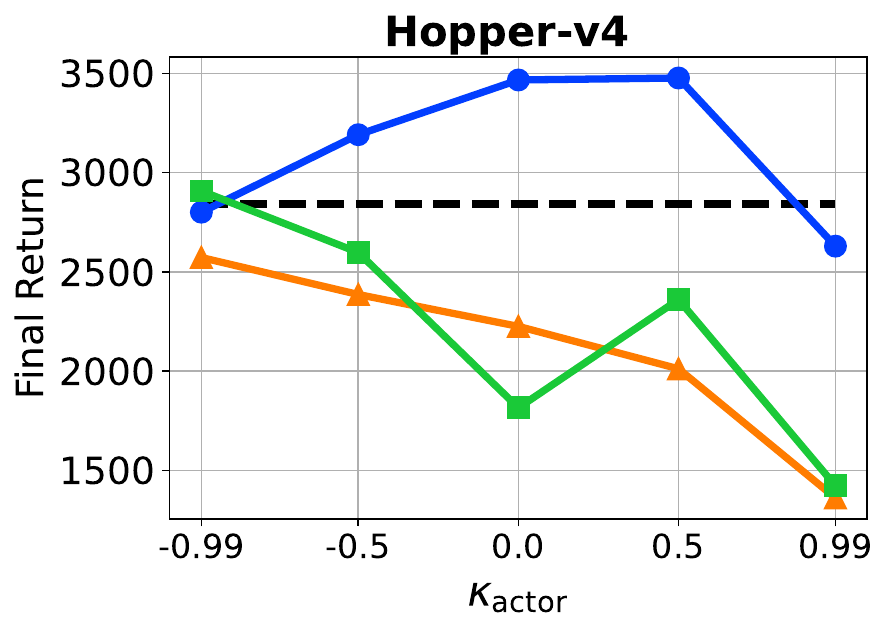}
\includegraphics[width=0.195\textwidth]{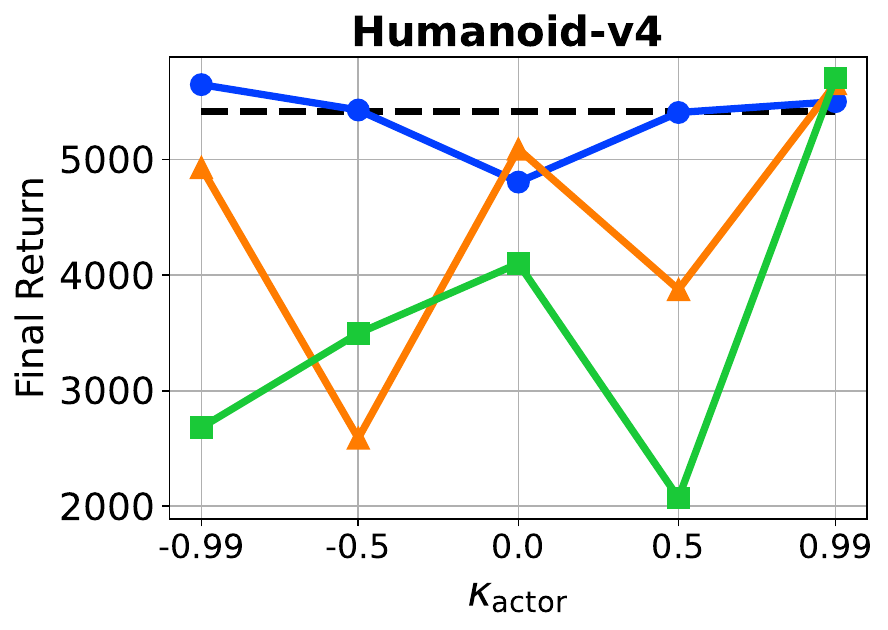}
\includegraphics[width=0.195\textwidth]{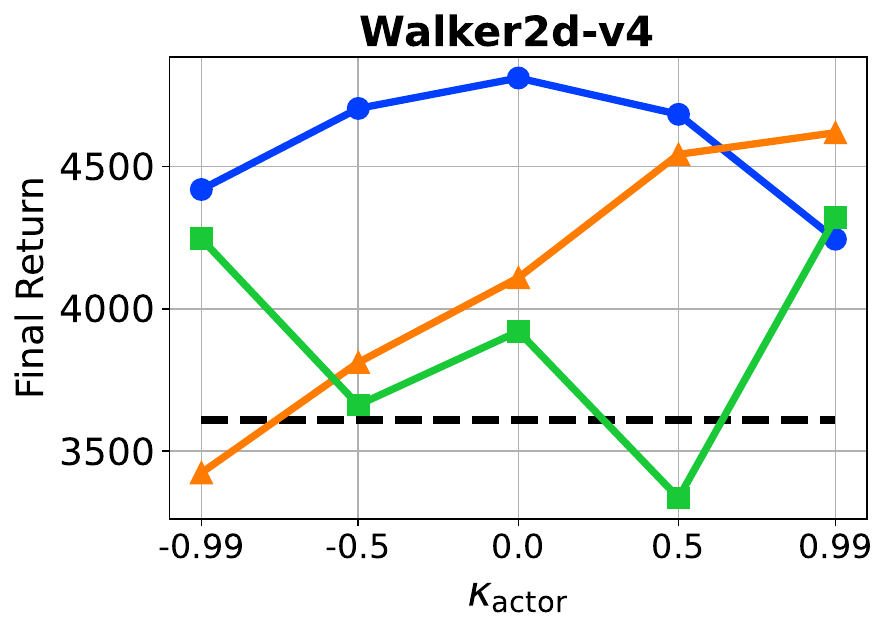}
\caption{The impact of the degree of actor optimism on the final return of USAC algorithm when Laplace distributed critics are assumed. The results are averaged over three seeds, each with three evaluation episodes. The experiments span various $\kappa_{\text{critic}}$ and $\kappa_{\text{actor}}$ values, using SAC as the baseline.}
 \label{fig:grid_rewards}
\end{figure*}
\begin{figure*}[htb]
\centering
\includegraphics[width=0.195\textwidth]{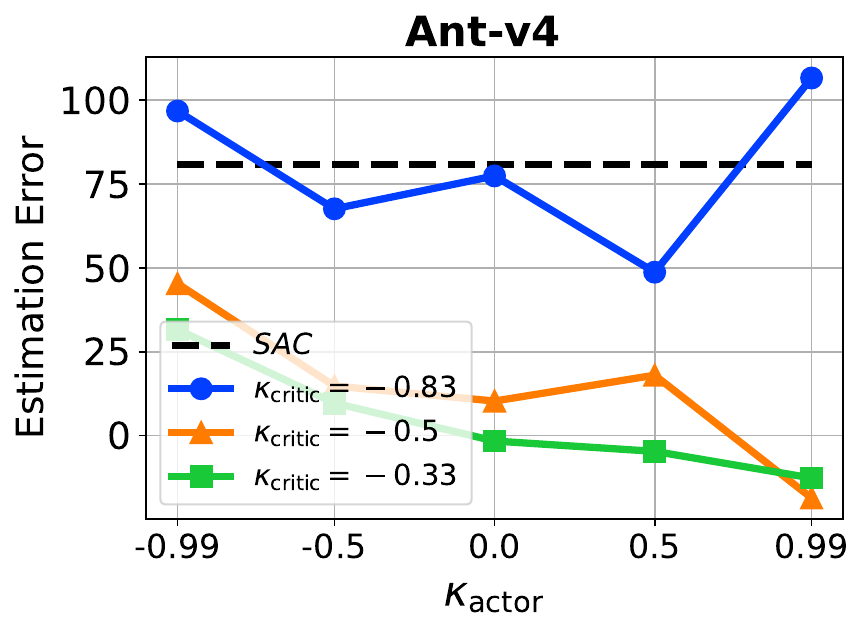}
\includegraphics[width=0.195\textwidth]{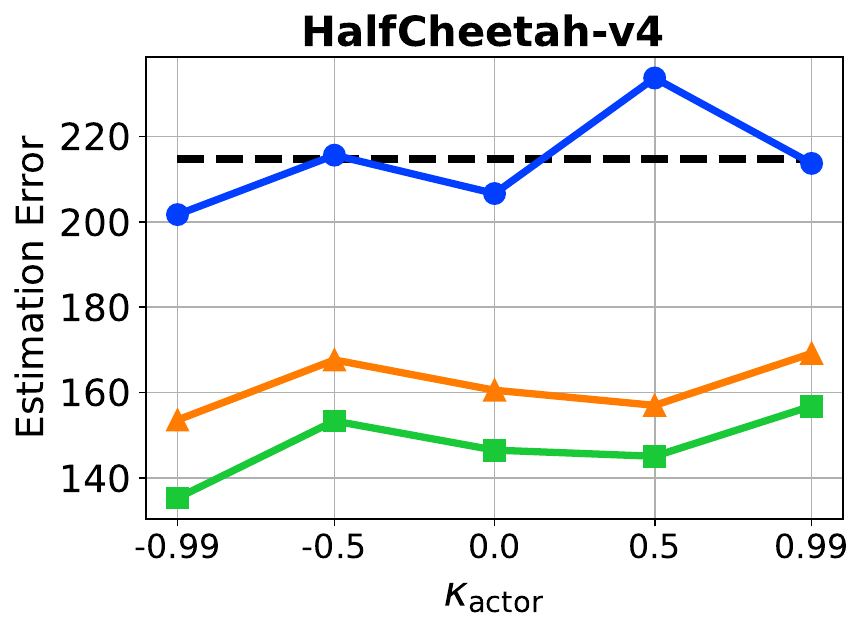}
\includegraphics[width=0.195\textwidth]{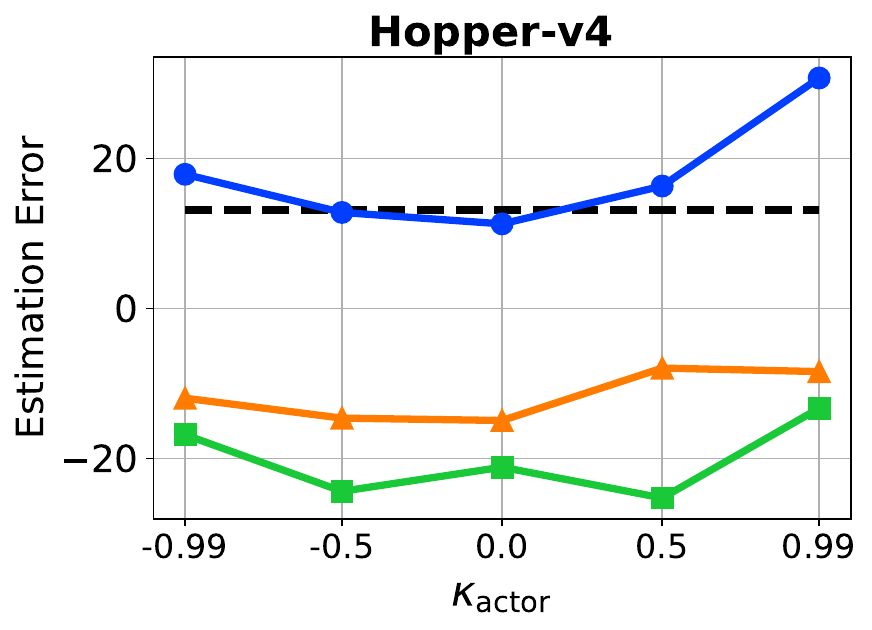}
\includegraphics[width=0.195\textwidth]{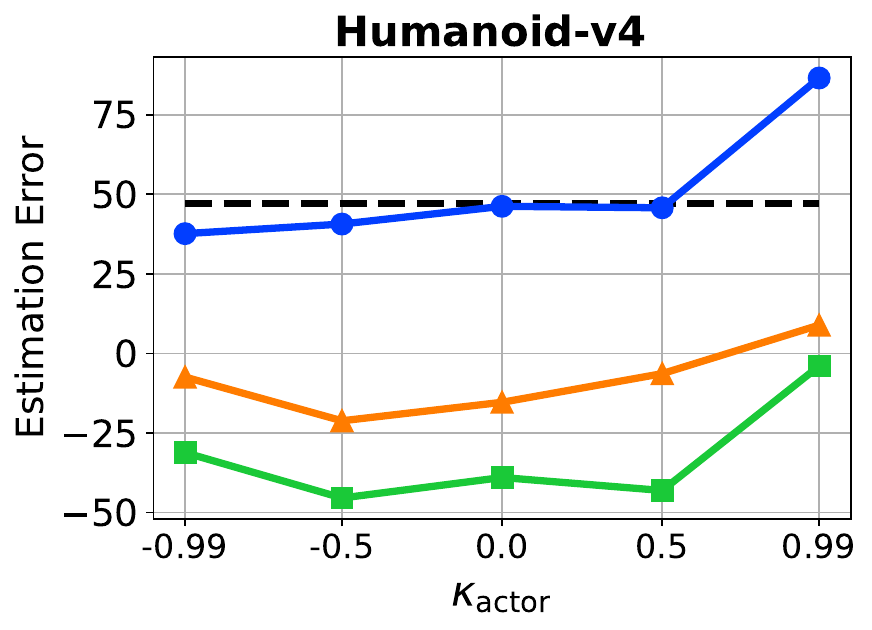}
\includegraphics[width=0.195\textwidth]{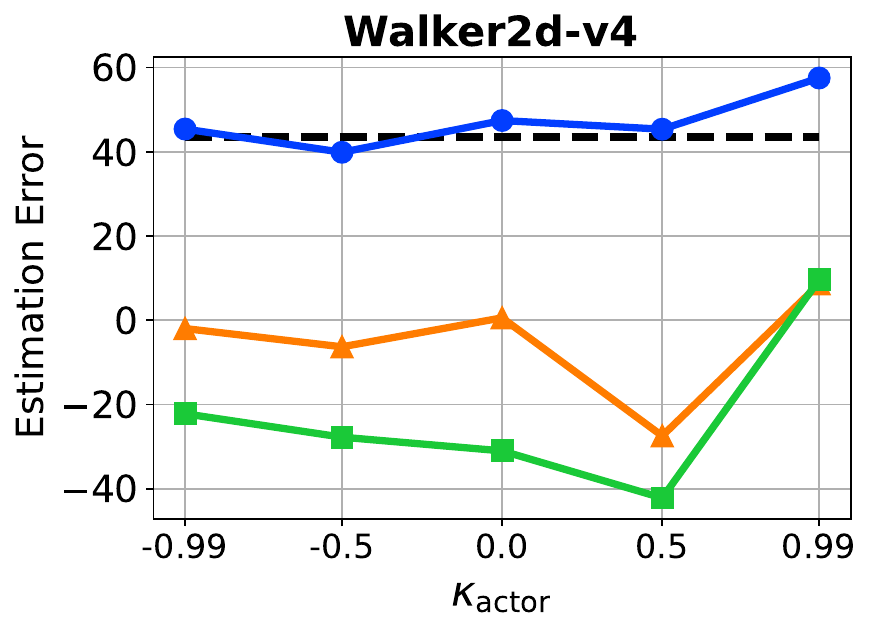}
\caption{The impact of the degree of actor optimism on the estimation error (true-estimation) of USAC algorithm when Laplace distributed critics are assumed. The results are averaged over three seeds, each with three evaluation episodes. The experiments span various $\kappa_{\text{critic}}$ and $\kappa_{\text{actor}}$ values, using SAC as the baseline.}
\label{fig:grid_estimation_error}
\end{figure*}
\begin{figure*}[th!]
\centering
\includegraphics[width=0.195\textwidth]{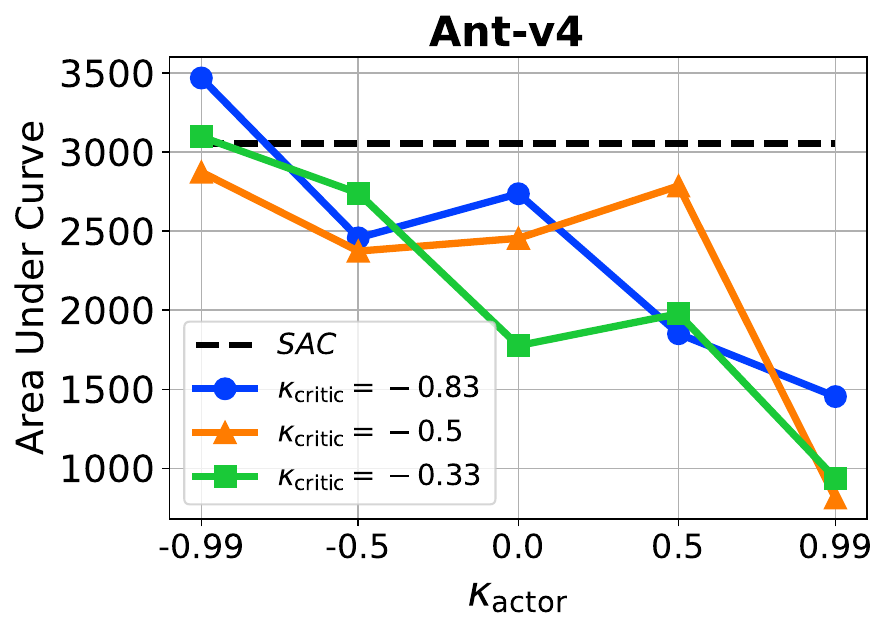}
\includegraphics[width=0.195\textwidth]{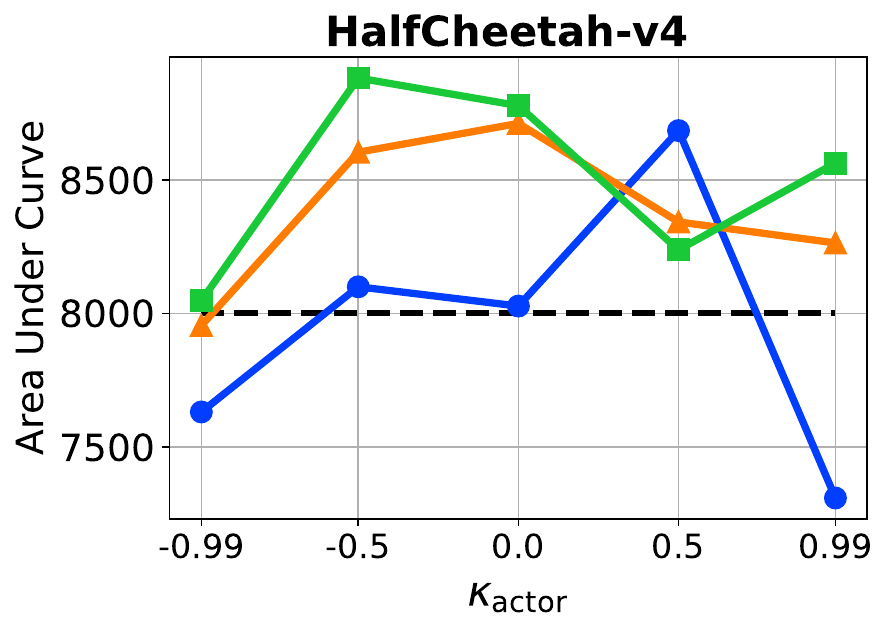}
\includegraphics[width=0.195\textwidth]{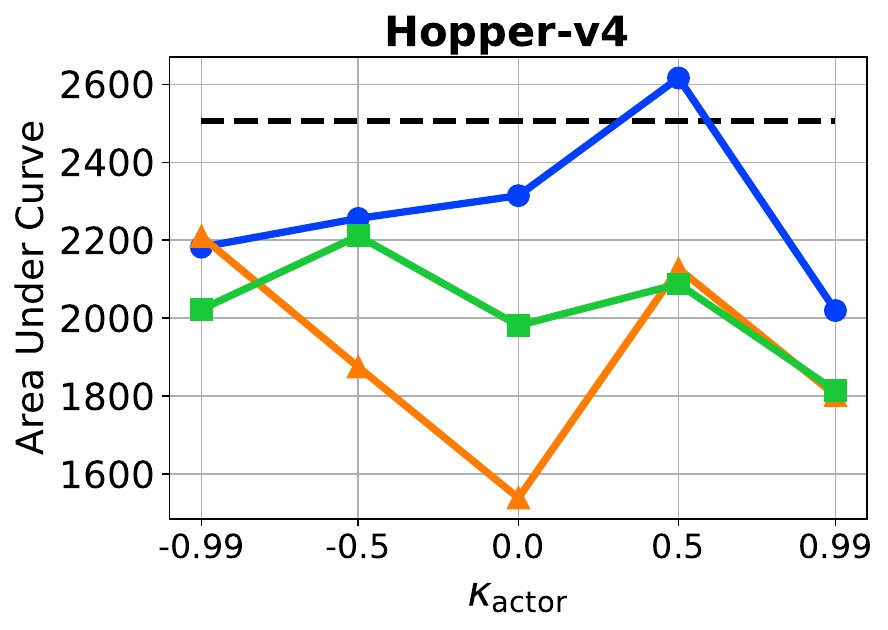}
\includegraphics[width=0.195\textwidth]{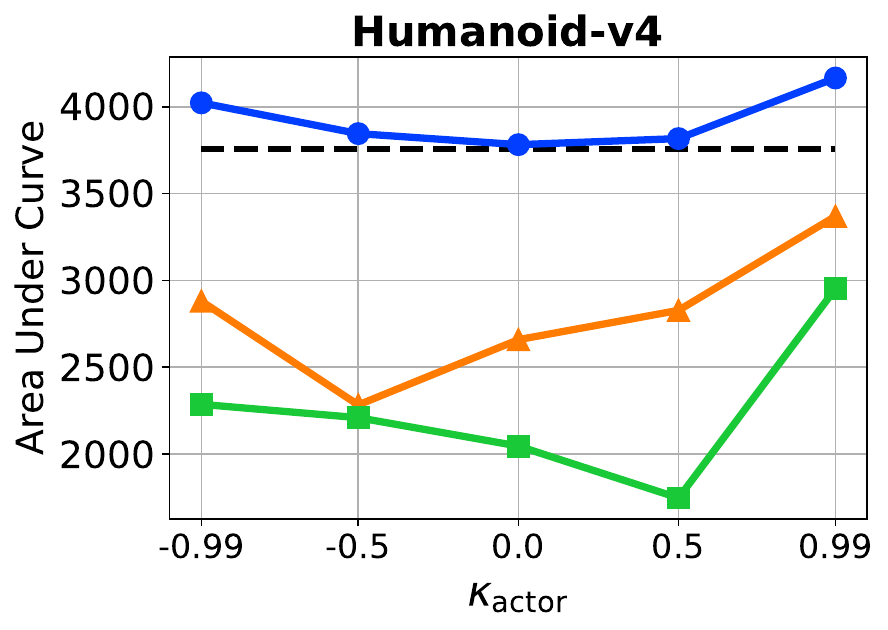}
\includegraphics[width=0.195\textwidth]{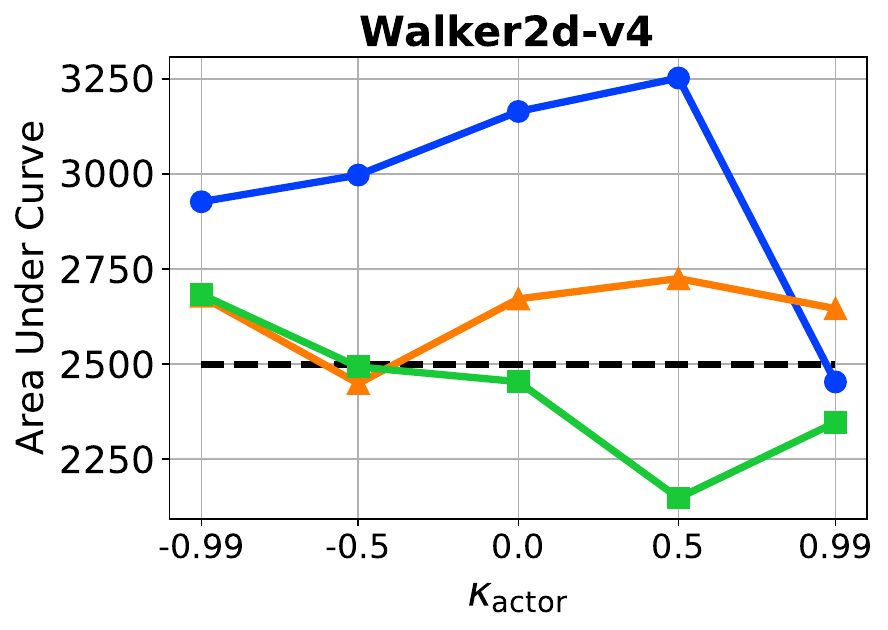}
\caption{The impact of the degree of actor optimism on the learning speed of USAC algorithm when Laplace distributed critics are assumed, measured by the area under learning curve. The results are averaged over three seeds, each with three evaluation episodes. The experiments span various $\kappa_{\text{critic}}$ and $\kappa_{\text{actor}}$ values, using SAC as the baseline.}
 \label{fig:grid_auc}
\hspace{0.5em}
\end{figure*}

\subsection{Question 1: Can USAC hyperparameters explain the optimism and pessimism dynamics?}
We investigate the performance profile of USAC with Laplace distributed critics  (\cref{sec:USAC-lap}) across different degrees of optimism and pessimism on five MuJoCo environments. We expect to replicate the findings of prior art \citep{fujimoto2018addressing,ciosek2019better,moskovitz2021tactical}, thereby verifying our claim that the trade-off problem can be reduced to tuning an interpretable hyperparameter.
Results are averaged over three different seeds, comparing against SAC as a baseline, as it is a special case of USAC with $\kappa_{\text{critic}}=\kappa_{\text{actor}}=-0.831559$. This allows us to systematically analyze how different configurations of utility parameters affect learning behavior and overall returns.
We examined critics ranging from pessimistic to semi-optimistic and actors from pessimistic to optimistic. Specifically, we used $\kappa_{\text{critic}} \in \{-0.831559, -0.5, -0.33\}$\footnote{Recall that $\kappa = -0.831559$ corresponds to $g(\kappa) = 1$, simplifying the utility function in \cref{eq:utility_function:laplace} to the minimum of the two critics. Therefore, by setting $\kappa_{\text{critic}} \approx -0.83$, we align with the pessimistic choices of SAC \citep{haarnoja2018soft}, TD3 \citep{fujimoto2018addressing}, and OAC \citep{ciosek2019better}.}, and ${\kappa_{\text{actor}} \in \{-0.99, -0.5, 0, 0.5, 0.99\}}$. The rationale for choosing more pessimistic (to semi-optimistic) $\kappa_{\text{critic}}$ values is to counteract the overestimation bias commonly seen in off-policy reinforcement learning. 
We present results for three metrics that provide complementary information: (i) {\bf final return} (\cref{fig:grid_rewards}) to assess the success of each configuration at solving the task at hand,
(ii) {\bf   estimation error} (\cref{fig:grid_estimation_error}) to assess the explanatory value of our framework for further investigation, (iii) {\bf area under the learning curve} (\cref{fig:grid_auc}) to assess the learning speed of each configuration. 

The estimation error is calculated by subtracting the estimated $Q$-value from the true (discounted) reward. Thus, positive values indicate underestimating, while negative values indicate overestimation, both of which can impact learning efficiency and stability.
Our results verify our hypothesis that the following key findings of the prior art can indeed be explained by a single hyperparameter value:
\begin{itemize}
    \item  \textbf{Overestimation} (i.e., negative estimation error) should always be avoided as it reduces performance, as visible from the strong correlation between high estimation error in \cref{fig:grid_estimation_error} and low final return in \cref{fig:grid_rewards}.
    \item \textbf{Underestimation} (i.e., positive estimation error) does not necessarily harm performance. This outcome reaffirms the central claims reported in the TD3 paper \citep{fujimoto2018addressing}, which yielded the wide adoption of the min-clipping method.
\end{itemize}
We also conclude from the results that a pessimistic approach should be adopted in critic training, whereas in actor training, either a pessimistic or optimistic approach can be suitable depending on the environment and the exploration-exploitation balance required. This outcome reaffirms the prior findings \citep{fujimoto2018addressing,ciosek2019better,moskovitz2021tactical}, supporting our hypothesis about the possibility of expressing the optimism and pessimism dynamics on a simple and interpretable axis.

\begin{table}[htb] 
\caption{The best pair of hyperparameters 
$(\kappa_{\text{critic}}, \kappa_{\text{actor}})$ for USAC with respect to final returns; see for example \cref{fig:grid_rewards}.} 
\label{tab:best pair}
\centering 
\begin{tabular}{llcc}
\\ \toprule
 & Environment & $\kappa_{\text{critic}}$ & $\kappa_{\text{actor}}$  \\
\midrule
 & \texttt{Ant-v4} & $-0.831559$ & $-0.99$\\
 & \texttt{HalfCheetah-v4} & $-0.33$ & $-0.50$ \\
 & \texttt{Hopper-v4} & $-0.831559$ & $0.50$  \\
 & \texttt{Humanoid-v4} & $-0.831559$ & $-0.831559$ \\ 
 & \texttt{Walker2d-v4} & $-0.831559$ & $0.0$  \\
\bottomrule
\end{tabular}
\end{table}

\subsection{Question 2: Can the hyperpararameters be effortlessly tuned to improve performance?}

From \cref{fig:grid_rewards,fig:grid_estimation_error,fig:grid_auc}, we can extract the best combinations of $(\kappa_{\text{critic}}, \kappa_{\text{actor}})$ per environment with respect to final return and treat them as the outcome of a search on a coarse grid of hyperparameters. These best values are listed in \cref{tab:best pair}. The difference between the best values across the environments highlights the environment-specific nature of optimism and pessimism and also demonstrates the success of our framework at detecting these differences.

\begin{table*}[t!]
\caption{Final return on MuJoCo environments trained with 1M time steps, averaged over 5 seeds. The best algorithms are highlighted in \textbf{bold}. $\pm$ corresponds to the standard deviation across repetitions. $(\kappa_{\text{critic}},\kappa_{\text{actor}})$ for USAC are listed in \cref{tab:best pair}.} 
\label{tab:results}
\centering
\begin{tabular}{lcccccc}
\\ \toprule
Environment & USAC (\textbf{ours}) & SAC & TD3 & OAC & TOP
\\ \midrule
\texttt{Ant-v4} & $\boldsymbol{ 5158 \pm 424}$& $4756 \pm 773$ & $4091 \pm 130$ & $4177 \pm 802$ &$ 4334\pm 1276$\\
\texttt{HalfCheetah-v4}  & $\boldsymbol{ 11736 \pm 297}$ & $10763 \pm 891$ & $10570 \pm 801$ & $8684 \pm 1671$ & $7311 \pm 3074$\\
\texttt{Hopper-v4} & $\boldsymbol{3442 \pm 126}$ & $ 3185 \pm 380$ &  $1986 \pm 1114$ & $ 3293 \pm 112$ & $ 3369 \pm 154$ \\
\texttt{Humanoid-v4} &  $\boldsymbol{ 5602 \pm 210}$ & $ 5503 \pm 198$ & $5149 \pm 221$ & $ 5390 \pm 121$& $5332 \pm 445$ \\ 
\texttt{Walker2d-v4} &  $\boldsymbol{ 4530 \pm 708}$ & $3757 \pm 1069$ & $4369 \pm 594$ & $3467 \pm 1193$& $ 4317 \pm 631$\\
\bottomrule
\end{tabular}
\end{table*}

\begin{figure*}[h!]
\centering
\includegraphics[width=0.33\textwidth]{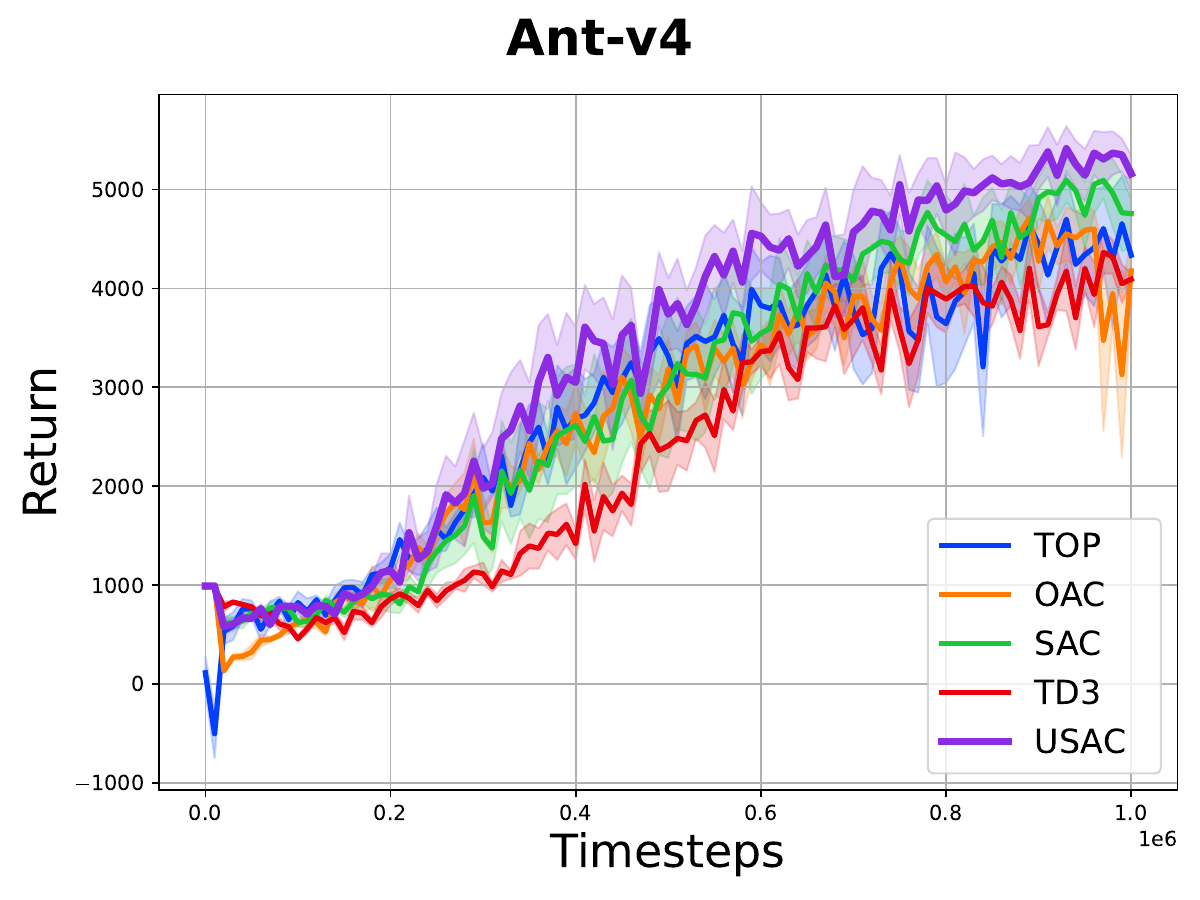}
\includegraphics[width=0.33\textwidth]{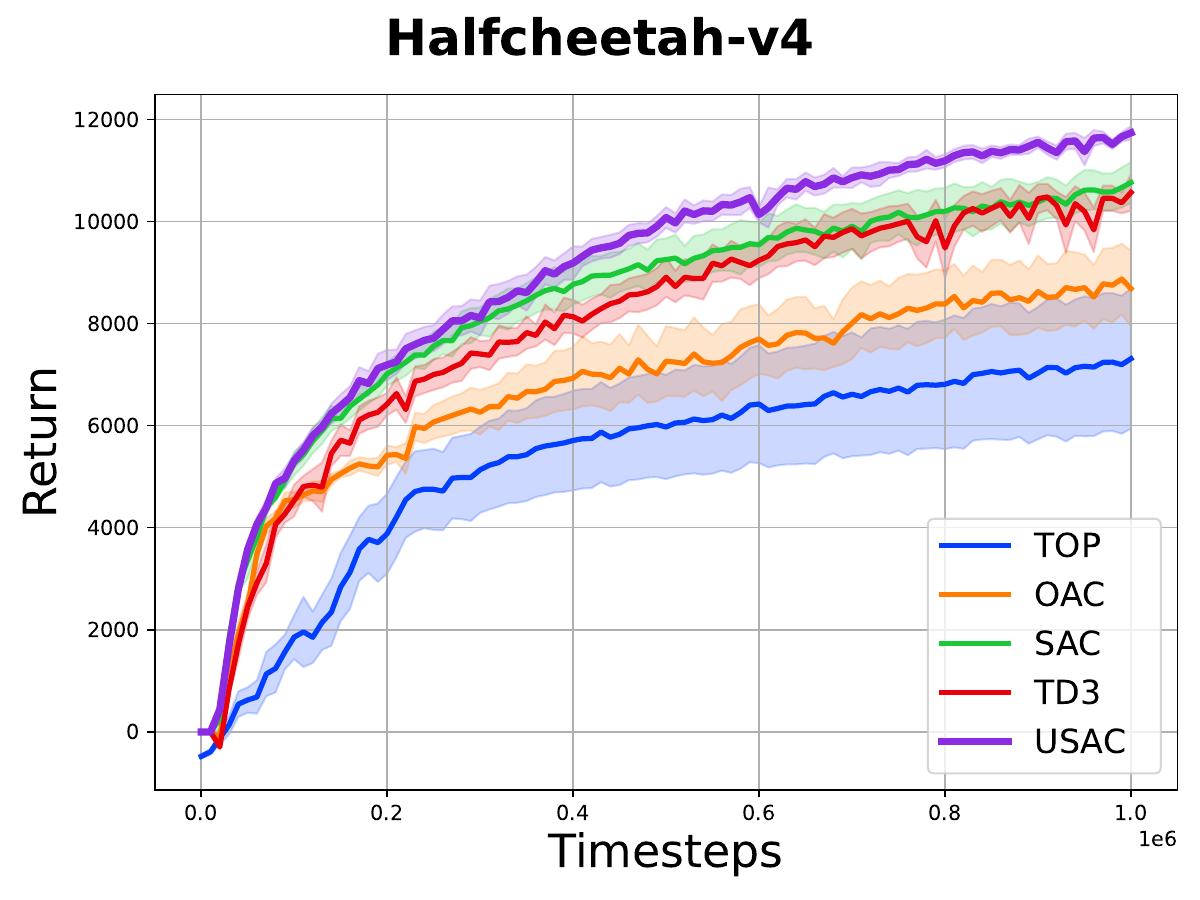}
\includegraphics[width=0.33\textwidth]{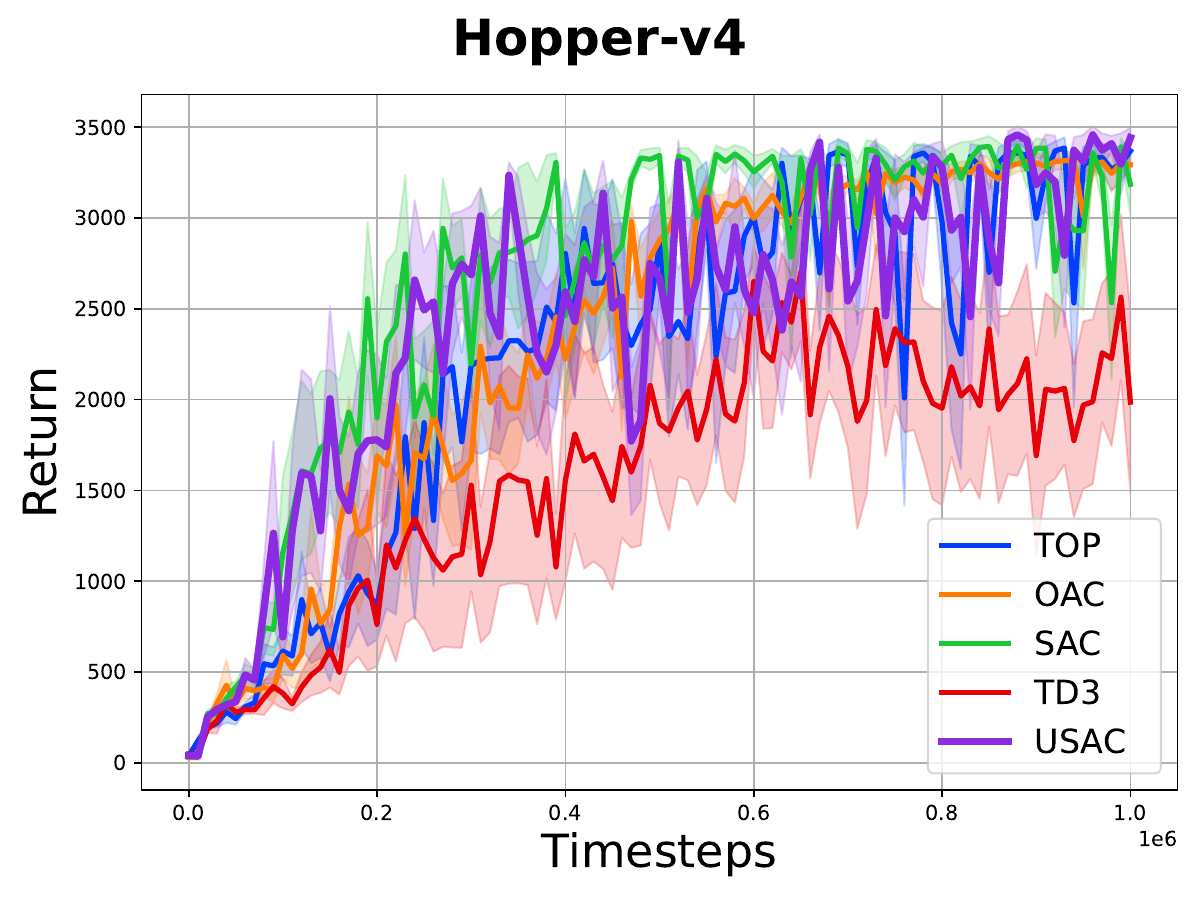}
\includegraphics[width=0.33\textwidth]{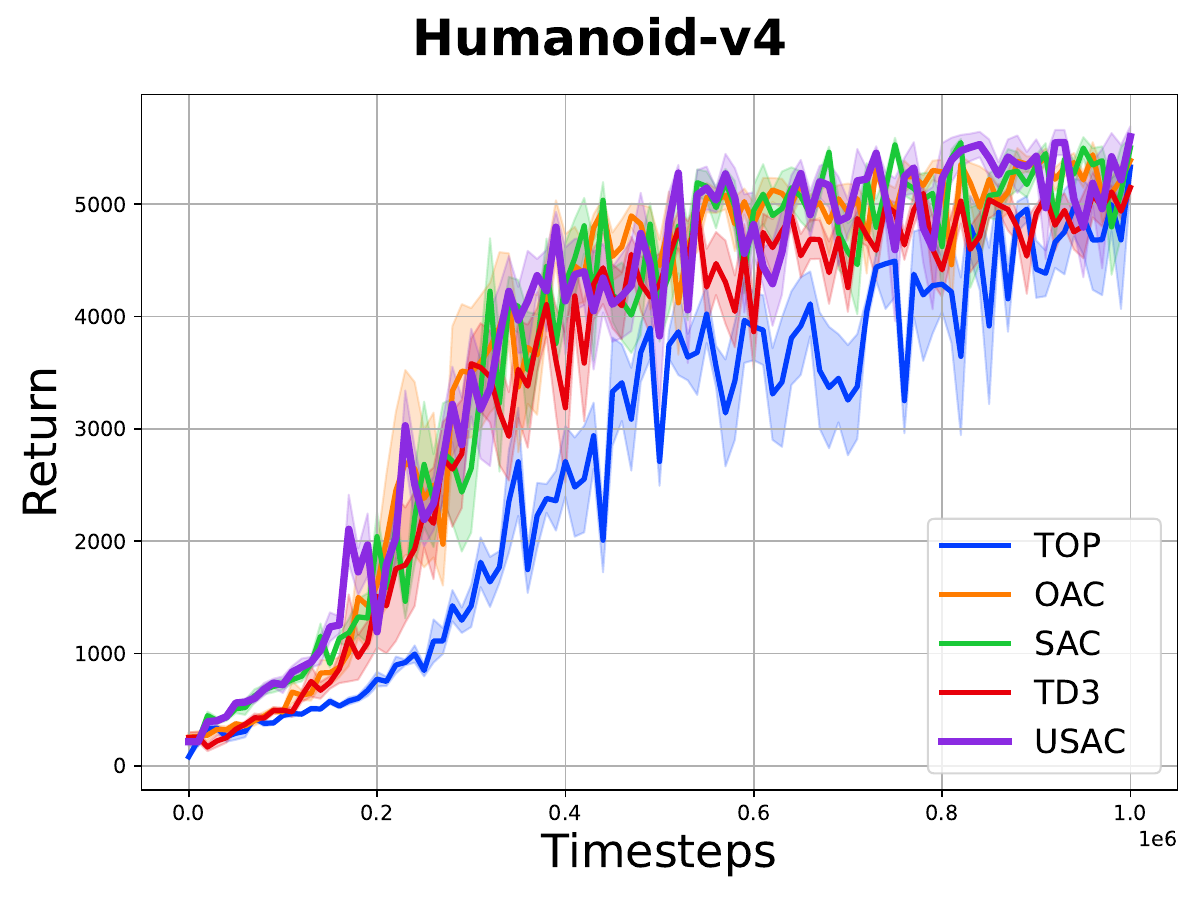}
\includegraphics[width=0.33\textwidth]{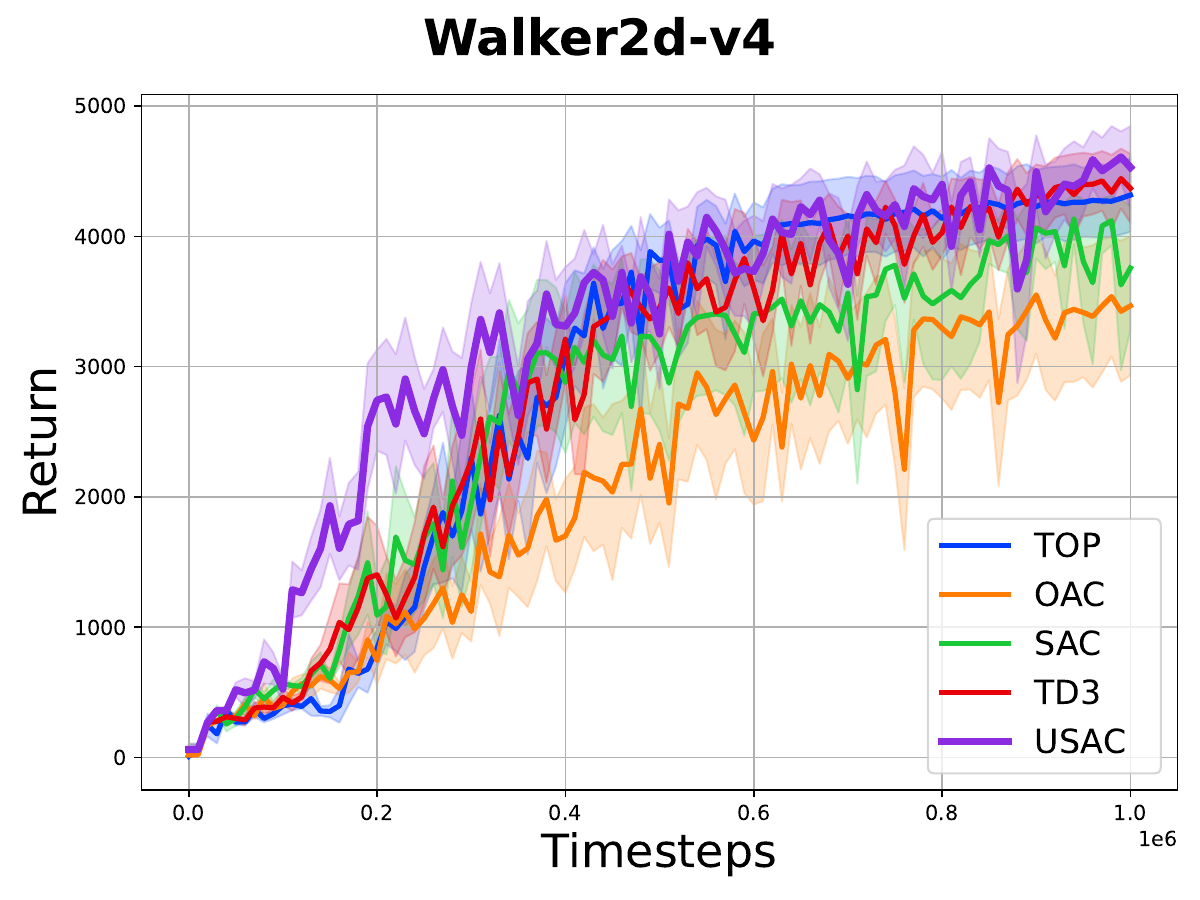}
\caption{Learning curves of USAC and baselines. Solid curves depict the average return across evaluation episodes, while the shaded areas represent the standard deviation.}
 \label{fig:final-rewards}
 \hspace{0.5em}
\end{figure*}

We give a performance comparison of the grid-optimized USAC against the baseline models in \cref{tab:results}.  The superior performance of USAC supports our hypothesis that the hyperparameters introduced by our framework also have an instrumental value, as they can be used to improve actor-critic algorithms. It is also remarkable that USAC incurs a smaller standard deviation than the baseline models across repetitions in the far majority of the cases, indicating an improved model stability during learning.  The learning curve of the grid-optimized USAC algorithm is given in \cref{fig:final-rewards} together with the baseline algorithm. The results suggest a consistent performance improvement that spans the whole learning regime.

In applications where online learning with minimum regret is more important than maximizing the final reward, such as interactive learning applications, a similar grid-search may also be performed with respect to the area under curve. While we omit a detailed reporting of such an analysis for brevity, one can see from \cref{fig:grid_auc} that USAC will be similarly competitive in that scenario.

\section{Discussion and limitations} \label{sec:discussion}

We introduced a novel algorithmic framework to trade pessimistic policy evaluation and pessimistic/optimistic policy search in off-policy deep actor-critic pipelines. The key idea is to balance optimism and pessimism through incorporating a utility function that captures the uncertainty in the critic due to limited access to the environment into a single and interpretable hyperparameter. This USAC framework is flexible and can be adapted to different distributions, with our experiments primarily focusing on the Laplace distribution. 

Our USAC framework allows independent control over the optimism and pessimism levels for the actor and critic. This decoupling enables a stable critic, necessary for accurate value function estimation, and an explorative actor, crucial for effective policy learning. Furthermore, USAC allows tuning the degree of optimism/pessimism via a single hyperparameter that interpretably encodes full pessimism as -1 and full optimism as +1 and all intermediate possibilities as values in between. Our experiments revealed that being pessimistic in critic training while being either optimistic or pessimistic in actor training can be beneficial. However, it is also essential to consider the specific environment. We also found that a search on a coarse grid of our new hyperparameters brings consistent performance gains across environments with diverse characteristics.

The primary limitation of USAC is that it does not prescribe any insights for on-the-fly tuning of its hyperparameters. An exciting direction for future research is to develop adaptive schemes where these parameters adjust dynamically throughout the learning process to better respond to the changing dynamics of the environment. There are several potential approaches to integrate automatic tuning. For example, (i) adapting the automatic tuning scheme used for $\alpha$ in SAC \citep{haarnoja2018soft,haarnoja2018softa}, (ii) employing bandit-based selection methods \citep{moskovitz2021tactical}, or (iii) exploring continuous learning approaches using gradient descent \citep{bharadwaj2023continuous}.

Another limitation is our evaluation scope. We focus on the five most used MuJoCo environments \citep{todorov2012mujoco}, where the studied phenomenon and the impact of our solution are clearly visible, ensuring comparability with the key baselines we consider. Broader validation across additional domains would further demonstrate generality, which we plan to explore in future work.  

Our findings could also be improved by incorporating higher moments of critic distributions (mean, variance, skewness, kurtosis) to better capture uncertainty in value estimates. Lastly, extending from critic pairs to ensembles may offer faster training at the cost of increased computational power.

\newpage

\section*{Acknowledgements}
This work was funded by the Novo Nordisk Foundation (NNF21OC0070621) and 
the Carlsberg Foundation (CF21-0250).
\bibliography{references}

\begin{thebibliography}{40}
\providecommand{\natexlab}[1]{#1}
\providecommand{\url}[1]{\texttt{#1}}
\expandafter\ifx\csname urlstyle\endcsname\relax
  \providecommand{\doi}[1]{doi: #1}\else
  \providecommand{\doi}{doi: \begingroup \urlstyle{rm}\Url}\fi

\bibitem[Auer et~al.(2002)Auer, Cesa-Bianchi, and Fischer]{auer2002finite}
P.~Auer, N.~Cesa-Bianchi, and P.~Fischer.
\newblock Finite-time analysis of the multiarmed bandit problem.
\newblock \emph{Machine Learning}, 2002.

\bibitem[Auer et~al.(2008)Auer, Jaksch, and Ortner]{auer2008near}
P.~Auer, T.~Jaksch, and R.~Ortner.
\newblock Near-optimal regret bounds for reinforcement learning.
\newblock In \emph{Advances in Neural Information Processing Systems (NeurIPS)}, 2008.

\bibitem[Bellemare et~al.(2017)Bellemare, Dabney, and Munos]{bellemare2017distributional}
M.~G. Bellemare, W.~Dabney, and R.~Munos.
\newblock A distributional perspective on reinforcement learning.
\newblock In \emph{Proceedings of the International Conference on Machine Learning (ICML)}, 2017.

\bibitem[Bellemare et~al.(2023)Bellemare, Dabney, and Rowland]{bellemare2023distributional}
M.~G. Bellemare, W.~Dabney, and M.~Rowland.
\newblock \emph{Distributional reinforcement learning}.
\newblock MIT Press, 2023.

\bibitem[Bertsekas and Tsitsiklis(1996)]{bertsekas1996neuro}
D.~Bertsekas and J.~N. Tsitsiklis.
\newblock \emph{Neuro-dynamic programming}.
\newblock Athena Scientific, 1996.

\bibitem[Bharadwaj and Ravindran(2023)]{bharadwaj2023continuous}
K.~Bharadwaj and B.~Ravindran.
\newblock Continuous tactical optimism and pessimism.
\newblock In \emph{Third Conference on Deployable AI}, 2023.

\bibitem[Brockman et~al.(2016)Brockman, Cheung, Pettersson, Schneider, Schulman, Tang, and Zaremba]{brockman2016openai}
G.~Brockman, V.~Cheung, L.~Pettersson, J.~Schneider, J.~Schulman, J.~Tang, and W.~Zaremba.
\newblock {OpenAI Gym}.
\newblock \emph{arXiv preprint arXiv:1606.01540}, 2016.

\bibitem[Bubeck and Cesa-Bianchi(2012)]{bubeck2012regret}
S.~Bubeck and N.~Cesa-Bianchi.
\newblock Regret analysis of stochastic and nonstochastic multi-armed bandit problems.
\newblock \emph{Foundations and Trends{\textregistered} in Machine Learning}, 2012.

\bibitem[Ciosek et~al.(2019)Ciosek, Vuong, Loftin, and Hofmann]{ciosek2019better}
K.~Ciosek, Q.~Vuong, R.~Loftin, and K.~Hofmann.
\newblock Better exploration with optimistic actor critic.
\newblock \emph{Advances in Neural Information Processing Systems (NeurIPS)}, 2019.

\bibitem[F{\"o}llmer and Knispel(2011)]{follmer2011entropic}
H.~F{\"o}llmer and T.~Knispel.
\newblock Entropic risk measures: Coherence vs. convexity, model ambiguity and robust large deviations.
\newblock \emph{Stochastics and Dynamics}, 2011.

\bibitem[F{\"o}llmer and Schied(2011)]{follmer2011stochastic}
H.~F{\"o}llmer and A.~Schied.
\newblock \emph{Stochastic finance: an introduction in discrete time}.
\newblock Walter de Gruyter, 2011.

\bibitem[Fujimoto et~al.(2018)Fujimoto, Hoof, and Meger]{fujimoto2018addressing}
S.~Fujimoto, H.~Hoof, and D.~Meger.
\newblock Addressing function approximation error in actor-critic methods.
\newblock In \emph{Proceedings of the International Conference on Machine Learning (ICML)}, 2018.

\bibitem[Haarnoja et~al.(2017)Haarnoja, Tang, Abbeel, and Levine]{haarnoja2017reinforcement}
T.~Haarnoja, H.~Tang, P.~Abbeel, and S.~Levine.
\newblock Reinforcement learning with deep energy-based policies.
\newblock In \emph{Proceedings of the International Conference on Machine Learning (ICML)}, 2017.

\bibitem[Haarnoja et~al.(2018{\natexlab{a}})Haarnoja, Zhou, Abbeel, and Levine]{haarnoja2018soft}
T.~Haarnoja, A.~Zhou, P.~Abbeel, and S.~Levine.
\newblock Soft actor-critic: Off-policy maximum entropy deep reinforcement learning with a stochastic actor.
\newblock In \emph{Proceedings of the International Conference on Machine Learning (ICML)}, 2018{\natexlab{a}}.

\bibitem[Haarnoja et~al.(2018{\natexlab{b}})Haarnoja, Zhou, Hartikainen, Tucker, Ha, Tan, Kumar, Zhu, Gupta, Abbeel, et~al.]{haarnoja2018softa}
T.~Haarnoja, A.~Zhou, K.~Hartikainen, G.~Tucker, S.~Ha, J.~Tan, V.~Kumar, H.~Zhu, A.~Gupta, P.~Abbeel, et~al.
\newblock Soft actor-critic algorithms and applications.
\newblock \emph{arXiv preprint arXiv:1812.05905}, 2018{\natexlab{b}}.

\bibitem[Howard and Matheson(1972)]{howard1972risk}
R.~A. Howard and J.~E. Matheson.
\newblock Risk-sensitive markov decision processes.
\newblock \emph{Management Science}, 1972.

\bibitem[Kingma and Ba(2015)]{kingma2014adam}
D.~Kingma and J.~Ba.
\newblock {Adam: A method for stochastic optimization}.
\newblock In \emph{International Conference on Learning Representations (ICLR)}, 2015.

\bibitem[Lattimore and Szepesv{\'a}ri(2020)]{lattimore2020bandit}
T.~Lattimore and C.~Szepesv{\'a}ri.
\newblock \emph{Bandit algorithms}.
\newblock Cambridge University Press, 2020.

\bibitem[Lillicrap et~al.(2016)Lillicrap, Hunt, Pritzel, Heess, Erez, Tassa, Silver, and Wierstra]{lillicrap2016continuous}
T.~Lillicrap, J.~Hunt, A.~Pritzel, N.~Heess, T.~Erez, Y.~Tassa, D.~Silver, and D.~Wierstra.
\newblock Continuous control with deep reinforcement learning.
\newblock In \emph{International Conference on Learning Representations (ICLR)}, 2016.

\bibitem[Mnih et~al.(2013)Mnih, Kavukcuoglu, Silver, Graves, Antonoglou, Wierstra, and Riedmiller]{mnih2013playing}
V.~Mnih, K.~Kavukcuoglu, D.~Silver, A.~Graves, I.~Antonoglou, D.~Wierstra, and M.~Riedmiller.
\newblock Playing atari with deep reinforcement learning.
\newblock \emph{arXiv preprint arXiv:1312.5602}, 2013.

\bibitem[Mnih et~al.(2015)Mnih, Kavukcuoglu, Silver, Rusu, Veness, Bellemare, Graves, Riedmiller, Fidjeland, Ostrovski, et~al.]{mnih2015human}
V.~Mnih, K.~Kavukcuoglu, D.~Silver, A.~A. Rusu, J.~Veness, M.~G. Bellemare, A.~Graves, M.~Riedmiller, A.~K. Fidjeland, G.~Ostrovski, et~al.
\newblock Human-level control through deep reinforcement learning.
\newblock \emph{Nature}, 2015.

\bibitem[Moskovitz et~al.(2021)Moskovitz, Parker-Holder, Pacchiano, Arbel, and Jordan]{moskovitz2021tactical}
T.~Moskovitz, J.~Parker-Holder, A.~Pacchiano, M.~Arbel, and M.~Jordan.
\newblock Tactical optimism and pessimism for deep reinforcement learning.
\newblock \emph{Advances in Neural Information Processing Systems (NeurIPS)}, 2021.

\bibitem[Paszke et~al.(2019)Paszke, Gross, Massa, Lerer, Bradbury, Chanan, Killeen, Lin, Gimelshein, Antiga, Desmaison, Kopf, Yang, DeVito, Raison, Tejani, Chilamkurthy, Steiner, Fang, Bai, and Chintala]{paszke2019pytorch}
A.~Paszke, S.~Gross, F.~Massa, A.~Lerer, J.~Bradbury, G.~Chanan, T.~Killeen, Z.~Lin, N.~Gimelshein, L.~Antiga, A.~Desmaison, A.~Kopf, E.~Yang, Z.~DeVito, M.~Raison, A.~Tejani, S.~Chilamkurthy, B.~Steiner, L.~Fang, J.~Bai, and S.~Chintala.
\newblock {PyTorch: An Imperative Style, High-Performance Deep Learning Library}.
\newblock \emph{Advances in Neural Information Processing Systems (NeurIPS)}, 2019.

\bibitem[Pavse et~al.(2020)Pavse, Durugkar, Hanna, and Stone]{pavse2020reducing}
B.~Pavse, I.~Durugkar, J.~Hanna, and P.~Stone.
\newblock Reducing sampling error in batch temporal difference learning.
\newblock In \emph{Proceedings of the International Conference on Machine Learning (ICML)}, 2020.

\bibitem[Prashanth et~al.(2022)Prashanth, Fu, et~al.]{prashanth2022risk}
L.~Prashanth, M.~C. Fu, et~al.
\newblock Risk-sensitive reinforcement learning via policy gradient search.
\newblock \emph{Foundations and Trends{\textregistered} in Machine Learning}, 2022.

\bibitem[Puterman(2014)]{puterman2014markov}
M.~L. Puterman.
\newblock \emph{Markov decision processes: discrete stochastic dynamic programming}.
\newblock John Wiley \& Sons, 2014.

\bibitem[Rowland et~al.(2019)Rowland, Dadashi, Kumar, Munos, Bellemare, and Dabney]{rowland2019statistics}
M.~Rowland, R.~Dadashi, S.~Kumar, R.~Munos, M.~G. Bellemare, and W.~Dabney.
\newblock Statistics and samples in distributional reinforcement learning.
\newblock In \emph{Proceedings of the International Conference on Machine Learning (ICML)}, 2019.

\bibitem[Schulman et~al.(2017)Schulman, Chen, and Abbeel]{schulman2017equivalence}
J.~Schulman, X.~Chen, and P.~Abbeel.
\newblock Equivalence between policy gradients and soft {Q}-learning.
\newblock \emph{arXiv preprint arXiv:1704.06440}, 2017.

\bibitem[Shen et~al.(2014)Shen, Tobia, Sommer, and Obermayer]{shen2014risk}
Y.~Shen, M.~J. Tobia, T.~Sommer, and K.~Obermayer.
\newblock \emph{Risk-sensitive reinforcement learning}.
\newblock MIT Press, 2014.

\bibitem[Sutton and Barto(2018)]{sutton2018reinforcement}
R.~S. Sutton and A.~G. Barto.
\newblock \emph{Reinforcement learning: An introduction}.
\newblock MIT Press, 2018.

\bibitem[Szepesv{\'a}ri(2022)]{szepesvari2022algorithms}
C.~Szepesv{\'a}ri.
\newblock \emph{Algorithms for reinforcement learning}.
\newblock Springer Nature, 2022.

\bibitem[Thrun and Schwartz(1993)]{thrun1993issues}
S.~Thrun and A.~Schwartz.
\newblock Issues in using function approximation for reinforcement learning.
\newblock In \emph{Proceedings of the Fourth Connectionist Models Summer School}, 1993.

\bibitem[Todorov et~al.(2012)Todorov, Erez, and Tassa]{todorov2012mujoco}
E.~Todorov, T.~Erez, and Y.~Tassa.
\newblock Mujoco: A physics engine for model-based control.
\newblock In \emph{IEEE/RSJ International Conference on Intelligent Robots and Systems (IROS)}, 2012.

\bibitem[Van~Hasselt(2010)]{van2010double}
H.~Van~Hasselt.
\newblock Double {Q}-learning.
\newblock \emph{Advances in Neural Information Processing Systems (NeurIPS)}, 2010.

\bibitem[Van~Hasselt et~al.(2016)Van~Hasselt, Guez, and Silver]{van2016deep}
H.~Van~Hasselt, A.~Guez, and D.~Silver.
\newblock Deep reinforcement learning with double {Q}-learning.
\newblock In \emph{Proceedings of the AAAI Conference on Artificial Intelligence}, 2016.

\bibitem[Van~Hasselt et~al.(2018)Van~Hasselt, Doron, Strub, Hessel, Sonnerat, and Modayil]{van2018deep}
H.~Van~Hasselt, Y.~Doron, F.~Strub, M.~Hessel, N.~Sonnerat, and J.~Modayil.
\newblock Deep reinforcement learning and the deadly triad.
\newblock \emph{Advances in Neural Information Processing Systems (NeurIPS)}, 2018.
\newblock Deep Reinforcement Learning Workshop.

\bibitem[Von~Neumann and Morgenstern(1947)]{von1947theory}
J.~Von~Neumann and O.~Morgenstern.
\newblock \emph{Theory of games and economic behavior}.
\newblock Princeton University Press, 1947.

\bibitem[Watkins and Dayan(1992)]{watkins1992q}
C.~J. Watkins and P.~Dayan.
\newblock Q-learning.
\newblock \emph{Machine Learning}, 1992.

\bibitem[Ying et~al.(2023)Ying, Hao, Zhou, Su, Yan, and Zhu]{ying2023reuse}
C.~Ying, Z.~Hao, X.~Zhou, H.~Su, D.~Yan, and J.~Zhu.
\newblock On the reuse bias in off-policy reinforcement learning.
\newblock In \emph{Proceedings of the International Joint Conference on Artificial Intelligence (IJCAI)}, 2023.

\bibitem[Ziebart(2010)]{ziebart2010modeling}
B.~D. Ziebart.
\newblock \emph{Modeling purposeful adaptive behavior with the principle of maximum causal entropy}.
\newblock Carnegie Mellon University, 2010.

\end{thebibliography}

\end{document}